\documentclass[conference]{IEEEtran}

\usepackage{cite}
\usepackage{amsmath,amssymb,amsfonts}
\usepackage{amsthm}
\usepackage{graphicx}
\usepackage{textcomp}
\usepackage{xcolor}
\usepackage{algorithmic}
\usepackage{algorithm}
\usepackage{booktabs}
\usepackage{hyperref}
\usepackage{enumitem}
\usepackage{tikz}
\usepackage{microtype}
\usetikzlibrary{shapes.misc, positioning, arrows.meta, fit}

\newtheorem{definition}{Definition}
\newtheorem{theorem}{Theorem}
\newtheorem{proposition}{Proposition}
\newtheorem{corollary}{Corollary}

\newtheorem{remark}{Remark}

\begin{document}

\title{Toward Standardized Cross-Vendor Agent Tool Trust Management in Autonomous Networks}

\author{\IEEEauthorblockN{Ravi Kant Sharma, Ashutosh Uttam, Ajay Kumar}
\IEEEauthorblockA{Ericsson\\
\{ravi.k.kant.sharma, ashutosh.uttam, ajay.o.kumar\}@ericsson.com}}

\maketitle

% ==============================================================================
% A Standardized Information Model for Cross-Vendor Agent Tool Trust Management
% in Autonomous Networks
% ==============================================================================
% Abstract and Section I: Introduction
% Target: IEEE Transactions on Network and Service Management (TNSM)
% ==============================================================================

\begin{abstract}
Autonomous Network Levels 4--5 require AI agents to invoke tools across vendor boundaries without human oversight, yet existing management standards lack a standardized mechanism for cross-vendor trust visibility. When a tool from Vendor~B is compromised, agents from Vendor~A continue invoking it---unaware of the trust degradation---causing cascading service impact. We present \texttt{AgentToolMO}, a proposed 3GPP NRM information model for agent tool trust management. The model comprises: a formally defined trust state machine with provable graduated enforcement, damped cascade propagation with bounded convergence, cross-vendor trust notifications via existing Management Services (MnS) interfaces, and retroactive impact assessment through NRM dependency graph traversal. Simulation-based evaluation across multi-vendor topologies shows that standardized cross-vendor notifications reduce blast radius from hours-scale undetected propagation to near-real-time containment bounded by MnS notification delivery, with cascade convergence guaranteed in bounded iterations and sub-linear notification scaling across vendor domains. The framework operates within existing 3GPP management infrastructure, leverages existing protocols, and provides a standardization pathway for trustworthy multi-vendor autonomous network management.
\end{abstract}

\begin{IEEEkeywords}
Autonomous networks, AI agents, trust management, network resource model, multi-vendor management, 5G/6G, information model, state machine, 3GPP
\end{IEEEkeywords}

% ==============================================================================
\section{Introduction}
\label{sec:introduction}
% ==============================================================================

%  -  Paragraph 1: Context - Autonomous Networks and AI Agents  - 

Autonomous 5G/6G network management increasingly relies on AI agents that invoke specialized tools across vendor boundaries~\cite{3gpp_tr28.908, 3gpp28105}. In a typical multi-vendor deployment, a radio optimization agent from one vendor may consume a configuration analytics tool managed by another~\cite{3gpp_ts28.533}. These cross-vendor tool dependencies create trust boundaries that are not explicitly addressed by current management frameworks.

%  -  Paragraph 2: The Trust Problem  - 

The core challenge is \textit{trust visibility across vendor boundaries}. A tool's behavior may degrade---its configuration recommendations may drift beyond operator-approved baselines, it may access managed elements outside its declared scope, its actions may cause KPI regression detected through 3GPP performance management, or it may produce inconsistent outputs due to model drift~\cite{huang_survey_2024, kumar_trust_2023}. Within a single vendor domain, such degradation is detected and contained internally. In multi-vendor environments, however, this containment fails: when Vendor~B restricts a degraded tool, Vendor~A's agents - which consume that tool's outputs - have no mechanism to learn of the restriction. We term this \textit{silent trust degradation}. The consequence is that unreliable outputs continue to drive autonomous decisions across domain boundaries, with blast radius expanding undetected until service impact becomes visible~\cite{nist_ai_rmf_2023}.

%  -  Paragraph 3: Limitations of Existing Approaches  - 

Existing approaches fail to close this gap. Sandbox-based trust~\cite{nokia_sandbox_wo2021069196} confines untrusted functions but does not expose trust outcomes cross-vendor. Per-vendor frameworks~\cite{3gpp_tr28.908} treat trust as proprietary internal state. Zero-trust architectures~\cite{nist_sp800207} verify identity and authorization - binary access decisions - but do not assess ongoing behavioral trustworthiness with graduated enforcement.

From a standardization perspective, the 3GPP NRM~\cite{3gpp_ts28.622, 3gpp_ts28.623} provides IOCs for managed entities, and recent extensions introduce AI/ML objects (\texttt{MLTrainingFunction}, \texttt{MLEntity})~\cite{3gpp28105}. However, no IOC exists for agent tool trust state, no MnS notification type covers trust transitions~\cite{3gpp28532}, and no formal safety properties (absence of unsafe state jumps, bounded cascade convergence) have been addressed within the 3GPP context. Prior trust propagation work~\cite{josang_trust_2007, kamvar_eigentrust_2003} provides theoretical foundations but lacks integration with telecom management information models.

%  -  Paragraph 4: Our Contributions  - 

In this paper, we address the cross-vendor agent tool trust visibility problem through a standardized information model approach. Our contributions are as follows:

\begin{enumerate}
\item[\textbf{C1:}] A standardized information model (\textit{AgentToolMO}) that integrates identity, lifecycle, and trust attributes within the 3GPP NRM, enabling cross-vendor queryability of agent tool trust state through existing Management Services interfaces. The model defines a new Information Object Class with attributes, relationships, and notification types that align with established NRM design patterns, ensuring backward compatibility and incremental adoptability.

\item[\textbf{C2:}] A formal trust state machine with graduated enforcement levels and proven safety properties. We demonstrate that the state machine guarantees no unsafe state transitions (a tool cannot jump from a restricted state to full trust without passing through monitored rehabilitation), provides bounded transitions to revocation (at most $k$ intermediate states from any degraded state to full revocation), and maintains monotonic trust reduction under sustained policy violations or KPI regression.

\item[\textbf{C3:}] A trust cascade propagation mechanism with a configurable damping factor that provides mathematical guarantees of convergence. When a tool's trust degrades, dependent tools and agents must be notified; our mechanism ensures that this propagation converges in bounded iterations regardless of topology, prevents cascade storms through exponential attenuation, and respects administrative domain boundaries through configurable propagation policies.

\item[\textbf{C4:}] A retroactive impact assessment algorithm that traverses NRM dependency graphs to identify services and configurations that were influenced by a tool during a period when its trust was subsequently found to be degraded. The algorithm operates within a configurable lookback window and produces a prioritized list of potentially affected entities requiring review or rollback.

\end{enumerate}

%  -  Paragraph 5: Paper Organization  - 

The remainder of this paper is organized as follows. Section~\ref{sec:related} surveys related work in trust management, autonomous network architectures, and relevant standardization efforts. Section~\ref{sec:problem} formally defines the cross-vendor trust visibility problem and establishes requirements for a solution. Section~\ref{sec:framework} presents the \textit{AgentToolMO} information model, including its attributes, relationships, state machine, and integration with the 3GPP NRM. Section~\ref{sec:formal_analysis} provides formal analysis of the model's safety properties, cascade convergence guarantees, and computational complexity bounds. Section~\ref{sec:evaluation} evaluates the approach through simulation across representative multi-vendor topologies, measuring blast radius reduction, cascade convergence time, and detection latency. Section~\ref{sec:discussion} discusses deployment considerations, standardization pathway, limitations, and future directions. Section~\ref{sec:conclusion} concludes the paper.

% Section II and III for:
% A Standardized Information Model for Cross-Vendor Agent Tool Trust Management in Autonomous Networks
% IEEE Transactions on Network and Service Management

\section{Related Work}
\label{sec:related}

The management of trust in autonomous network systems draws upon several distinct research communities. We organize the related literature into five categories, identifying the specific gap that motivates our contribution in each.

\subsection{Trust Management in Multi-Agent Systems}

Computational trust models have been studied extensively. Marsh~\cite{marsh1994} formalized trust as a continuous value derived from past experiences. J{\o}sang~\cite{josang2001} introduced subjective logic for reasoning about trust under uncertainty. Ramchurn et al.~\cite{ramchurn2004} surveyed trust mechanisms including individual-level learning, institutional mechanisms, and socio-cognitive models. More recent work has explored hybrid approaches~\cite{teacy2006}, game-theoretic trust~\cite{noorian2010}, and blockchain-based decentralized trust~\cite{calvaresi2019}.

Despite their maturity, these models assume homogeneous platforms where trust information is internally accessible. None address \emph{standardized information models} enabling trust to be queried across vendor boundaries through well-defined management interfaces - a prerequisite for multi-vendor telecom management.

\subsection{3GPP Management Framework and Information Models}

The 3GPP NRM (TS~28.541~\cite{3gpp28541}) defines Information Object Classes (IOCs) for managed entities accessed through Generic Management Services (TS~28.532~\cite{3gpp28532}). SA5 has extended this for autonomous operations: Management Data Analytics (MDAS)~\cite{3gpp28104}, intent-driven management~\cite{3gpp28312}, and AI/ML management (TS~28.105~\cite{3gpp28105}) introducing \texttt{MLTrainingFunction} and \texttt{MLEntity} IOCs.

Critically, no existing IOC in the 3GPP NRM represents the trust state of an agent's tool. While the framework provides the architectural patterns for our solution - IOC definition, attribute specification, notification mechanisms, and MnS query interfaces - the semantic content for agent tool trust has not been specified.

\subsection{AI/ML Trust and Safety in Telecom Networks}

Several industry initiatives address trust and safety for AI functions in telecommunications, though none provide a cross-vendor standardized information model.

\subsubsection{ETSI ENI}
The European Telecommunications Standards Institute (ETSI) Experiential Networked Intelligence (ENI) Industry Specification Group~\cite{etsi_eni} has defined a reference architecture for AI-assisted network management that includes governance concepts applicable to trust assessment. However, the focus is on a single operator's domain rather than cross-vendor visibility.

\subsubsection{ETSI ZSM}
The ETSI Zero-touch network and Service Management (ZSM) framework~\cite{etsi_zsm_2022} defines a reference architecture for fully automated network management using closed-loop automation across management domains. While ZSM addresses cross-domain coordination through domain integration fabrics and data services, it does not define trust management for the AI-driven decision agents within those domains. The proposed framework complements ZSM by providing the trust governance layer that enables safe agent operation within ZSM-managed domains.

\subsubsection{O-RAN Alliance}
The Open RAN Alliance has specified ML model lifecycle management~\cite{oran_aiml} including model training, validation, deployment, and monitoring phases. The O-RAN architecture supports model performance tracking, but trust assessment remains a local concern within the near-RT RIC or non-RT RIC without standardized exposure to external management systems.

\subsubsection{Cognitive Function Sandboxing}
A sandboxing orchestration framework~\cite{nokia_sandbox_wo2021069196} introduces graduated trust levels for cognitive functions, where untrusted functions execute in sandbox environments and progressively earn permission to act on the live network under specific conditions (temporal, topological, or scope restrictions). This demonstrates the value of graduated trust with behavioral evidence but operates within a single vendor's management entity without cross-vendor visibility.

\subsubsection{Exploration Management for Safe RL Agents}
An exploration management framework~\cite{ericsson_exploration_wo2024183933} coordinates multiple reinforcement learning agents by computing differentiated exploration policies based on operator-defined safety criteria and network part criticality. The framework provides regret guarantees but keeps safety models internal to the exploration management function - not exposed as a queryable information model.

\subsubsection{Network Control Function}
A Network Control Function (NCF)~\cite{nokia_ncf_gb2640186} evaluates cognitive function recommendations before network application using adaptive acceptance criteria (inversely proportional to proposed change magnitude, directly proportional to expected improvement). Trust builds as the function demonstrates reliable predictions. However, the NCF operates as a single-vendor supervisory entity.

\subsubsection{3GPP SA5 Closed Control Loop Trust}
The 3GPP management services for communication service assurance (TS~28.535)~\cite{3gpp28535} includes closed control loop concepts with trust development provisions where loops negotiate precedence on shared resources. However, trust remains implicit in coordination priority rather than an explicit, queryable state machine.

\subsection{Zero-Trust Architectures for Network Management}

Zero-trust architecture (ZTA)~\cite{nist_sp800207} replaces perimeter-based trust with continuous verification, requiring every access request to be authenticated and authorized regardless of network location. CISA's maturity model~\cite{cisa_ztmm} defines progressive trust levels for identity, devices, and data. The NIST AI Risk Management Framework~\cite{nist_ai_rmf_2023} provides guidelines for trustworthy AI systems including governance, mapping, measurement, and management functions - but addresses organizational risk posture rather than runtime trust state machines with cross-vendor notification.

However, ZTA addresses a fundamentally different problem. It focuses on \emph{identity verification} (is this entity who it claims?) and \emph{authorization} (does it have permission?) - binary decisions at each request. Agent tool trust, by contrast, requires \emph{graduated behavioral assessment} over time. A tool may be fully authenticated and authorized yet exhibit configuration drift (recommending parameters outside approved ranges), scope violations (accessing managed elements beyond its declared domain), or recommendation inconsistency (producing conflicting outputs under identical conditions) that warrant trust reduction without complete revocation. Our problem requires a state machine with multiple trust states, transition conditions based on behavioral evidence, and enforcement policies that modulate rather than eliminate tool usage. Furthermore, the trust state must be visible across vendor boundaries - a requirement entirely absent from current zero-trust frameworks.

\subsection{Agent Identity and Authorization Frameworks}

The Model Context Protocol (MCP)~\cite{mcp_spec} provides a universal interface for agents to invoke tools but ``is not a security protocol by design''~\cite{auth0_ai_identity} and offloads authorization to external servers. A knowledge source access framework~\cite{nokia_knowledge_cn122001607} addresses discovery and authorization of knowledge sources using access tokens and consumer profiles in the 3GPP context.

The 3GPP Common API Framework (CAPIF, TS~23.222) provides API registration, discovery, and access control for service APIs. While tools could be viewed as APIs, CAPIF addresses a fundamentally different concern: it governs \emph{whether} an invoker may call an API (binary access decisions at invocation time), whereas our framework governs \emph{how much to trust} an API's outputs based on continuous behavioral observation. CAPIF operates in the service/control plane with no mechanism for graduated trust states, cascade propagation to downstream consumers, or retroactive impact assessment. Our model operates in the management plane (NRM/MnS) and complements CAPIF: a tool may pass all CAPIF authorization checks yet warrant reduced trust due to behavioral drift detected through KPI monitoring.

These frameworks address authentication, authorization, and access control---all prerequisites for, but distinct from, \emph{behavioral trust}. A tool may be fully authorized yet warrant reduced trust due to observed behavioral degradation. Our contribution fills this gap: a standardized information model for ongoing behavioral trust with a formal state machine and cross-vendor notification.

\subsection{Summary of Gaps}

The common limitation across all existing approaches is that trust assessment remains per-vendor: each system uses proprietary criteria, maintains internal trust state, and does not expose it through standardized interfaces. No existing work combines standardized information models, formal state machines with safety guarantees, cross-vendor queryability, and 3GPP NRM integration into a unified framework for agent tool trust.

\section{Problem Statement and Motivation}
\label{sec:problem}

\subsection{The Cross-Vendor Trust Visibility Problem}

Consider a representative multi-vendor 5G network deployment. Vendor~A's management system deploys an autonomous Agent~$\alpha$ responsible for optimizing radio resource management across a set of \texttt{NetworkSliceSubnet} instances. Agent~$\alpha$ accomplishes its objectives by invoking Tool~$T$, a configuration analytics tool provided by Vendor~B's domain management system through a cross-domain MnS interface.

The following scenario illustrates the trust visibility problem:

\begin{enumerate}
    \item Agent~$\alpha$ (Vendor~A) routinely invokes Tool~$T$ (Vendor~B) to obtain configuration recommendations for slice capacity adjustments.
    \item Vendor~B's internal monitoring detects degraded behavior in Tool~$T$: the tool's cell parameter recommendations have drifted outside the approved configuration baseline (handover thresholds exceeding operator-defined bounds), and two recommendations triggered KPI regression alarms in Vendor~B's own test environment.
    \item Vendor~B's management system appropriately degrades Tool~$T$'s internal trust state, restricting its usage within Vendor~B's domain to low-risk operations only.
    \item \textbf{Problem:} Vendor~A's management system has \emph{no visibility} into this trust state change. No notification is generated, no queryable attribute is updated, and no cross-vendor mechanism exists to communicate the degradation.
    \item \textbf{Consequence:} Agent~$\alpha$ continues invoking Tool~$T$ at full operational scope. The degraded tool produces suboptimal or potentially unsafe configuration changes that propagate through the NRM dependency graph - from \texttt{NetworkSliceSubnet} to \texttt{NetworkSlice} to \texttt{ManagedElement} - without any trust-based containment.
\end{enumerate}

This scenario reflects emerging operational patterns. As autonomous management functions proliferate across vendor boundaries - driven by 3GPP's intent-driven management framework and the adoption of AI/ML for network operations - the frequency of cross-vendor tool dependencies will increase substantially. Each such dependency creates an invisible trust boundary where safety-critical information cannot propagate.

\noindent\textit{Running Example.} We illustrate the framework throughout this paper using a concrete scenario: \texttt{CellConfigOptimizer}, a Vendor~B tool that recommends antenna tilt and transmit power parameters for RAN cells. Vendor~A's \texttt{SliceOrchestrator} agent consumes these recommendations to optimize network slice performance. When \texttt{CellConfigOptimizer} receives a faulty model update, its recommendations begin drifting from approved baselines---but Vendor~A has no visibility into this degradation.

\subsection{Quantifying the Gap}

We characterize the severity of the trust visibility gap along three dimensions.

\subsubsection{Detection Latency}
Without a standardized trust model, detection time is effectively unbounded - relying on eventual service degradation visible through KPIs, which may take hours. With standardized notification: $t_{\text{detect}} - t_{\text{event}} \leq \Delta t_{\text{notification}}$, bounded by MnS delivery latency (sub-second).

\subsubsection{Blast Radius}
During the detection gap, the blast radius grows as the tool continues operating. A tool affecting \texttt{NRCellDU} parameters cascades through the NRM: $\texttt{NRCellDU} \rightarrow \texttt{GNBDUFunction} \rightarrow \texttt{NetworkSliceSubnet} \rightarrow \texttt{NetworkSlice}$. Without containment, $|B(t)| = |B_0| \cdot (1 + r \cdot (t - t_{\text{event}}))$ where $r$ is the invocation rate.\footnote{This linearization provides a lower bound; actual growth depends on NRM graph fan-out and is evaluated through simulation in Section~\ref{sec:evaluation}.} With trust-based containment: $|B(t)| \leq |B_0| \cdot (1 + r \cdot \Delta t_{\text{notification}}) \approx |B_0|$.

\subsubsection{Multi-Vendor Scaling}
With $n$ vendors, bilateral trust exchange requires $O(n^2)$ proprietary integrations ($n(n-1)/2$ pairwise agreements). A standardized model reduces this to $O(n)$: each vendor implements the standard IOC once. For $n = 5$, this reduces from 10 bilateral integrations to 5 standard implementations - a $2\times$ reduction that grows linearly with vendor count.

\subsection{Requirements for a Solution}

Based on the analysis of the trust visibility gap and informed by the architectural patterns established in existing work, we derive seven requirements that a solution must satisfy.

\begin{itemize}[leftmargin=*]
    \item \textbf{R1: Standardized Information Model.} The trust state of agent tools must be represented as Information Object Classes queryable via existing 3GPP MnS interfaces (TS~28.532). The model must be expressible in the NRM information modeling methodology and integrate with existing IOC hierarchies without requiring modifications to the base NRM.

    \item \textbf{R2: Formal Trust State Machine.} Trust transitions must be governed by a formally specified state machine with \emph{safety guarantees}: no sequence of inputs may cause a transition from a restricted trust state to an unrestricted state without positive evidence of trustworthiness exceeding defined thresholds.

    \item \textbf{R3: Cross-Vendor Notification.} Trust state changes must generate notifications consumable by any authorized management system through the standardized 3GPP notification mechanism.

    \item \textbf{R4: Cascade Containment with Convergence.} When trust degradation in one tool triggers trust reassessment of dependent tools, the cascade must be bounded and convergent.

    \item \textbf{R5: Retroactive Impact Assessment.} Upon trust degradation, it must be possible to identify the set of management actions performed by the affected tool during a specified lookback window.

    \item \textbf{R6: Operator Configurability.} Trust thresholds and enforcement policies must be operator-configurable through the standard provisioning MnS without requiring software upgrades.

    \item \textbf{R7: Protocol Independence.} The information model must be independent of specific transport protocols.
\end{itemize}

\subsection{Formal Problem Definition}

We formalize the problem. A multi-vendor management network is a directed graph $G_v = (V, E_v)$ where $V = \{v_1, \ldots, v_n\}$ are vendor management domains and $(v_i, v_j) \in E_v$ indicates agents in $v_i$ use tools from $v_j$. Let $\mathcal{T}$ be the set of all tools with ownership function $\text{owner}: \mathcal{T} \rightarrow V$, and $\text{uses}: \mathcal{A} \rightarrow 2^{\mathcal{T}}$ mapping agents to their tools.

The trust visibility function $\text{vis}: V \times \mathcal{T} \rightarrow \{\textit{visible}, \textit{invisible}\}$ captures whether a domain can observe a tool's trust state. Currently:
\begin{equation}
    \text{vis}(v_i, t_j) = \begin{cases}
    \textit{visible} & \text{if } v_i = \text{owner}(t_j) \\
    \textit{invisible} & \text{otherwise}
    \end{cases}
\end{equation}

The goal is to achieve: $\text{vis}(v_i, t_j) = \textit{visible}$ for all $v_i$ where $\exists\, a \in \mathcal{A}_{v_i} : t_j \in \text{uses}(a)$, where $\mathcal{A}_{v_i}$ denotes the set of agents belonging to vendor domain $v_i$ (i.e., all agents owned and operated within that vendor's management system).

\begin{definition}[Trust State]
For each tool $t_j$, the trust state is a tuple $\sigma(t_j) = (s, \mathbf{e}, \tau, \rho)$ where $s \in S$ is the current state in the trust state machine (defined in Section~\ref{sec:framework}), $\mathbf{e}$ is the evidence vector, $\tau$ is the timestamp of the most recent transition, and $\rho: S \rightarrow \mathcal{P}$ maps each state to its enforcement policy.
\end{definition}

\noindent\textbf{Problem Statement.} Design an information model $\mathcal{I}_M$ such that: (1) $\mathcal{I}_M$ is expressible as a 3GPP-compliant IOC enabling cross-vendor visibility; (2) the trust state machine satisfies safety - no transition from a degraded state to a higher-trust state without sufficient evidence; (3) trust state changes generate cross-vendor notifications with latency $\leq \Delta t_{\text{max}}$; (4) any cascade converges within $k_{\text{max}}$ steps.

The following sections present our solution: Section~\ref{sec:framework} defines the information model including the trust state machine and cascade containment algorithm, and Section~\ref{sec:evaluation} provides quantitative evaluation.

\section{Proposed Framework}
\label{sec:framework}

This section presents the complete technical framework for cross-vendor agent tool trust management in autonomous networks. The framework is grounded in the 3GPP Network Resource Model (NRM) architecture defined in TS~28.541~\cite{3gpp28541} and extends it with managed objects, state machines, and algorithms specifically designed for governing agent tool interactions in multi-vendor network management environments.

\begin{figure}[t]
\centering
\resizebox{\columnwidth}{!}{%
\begin{tikzpicture}[
    every node/.style={font=\footnotesize},
    ioc/.style={draw, rectangle, rounded corners=2pt, minimum width=2.6cm, minimum height=0.6cm, align=center, line width=0.6pt, fill=white},
    newioc/.style={draw, rectangle, rounded corners=2pt, minimum width=2.6cm, minimum height=0.6cm, align=center, line width=0.8pt, fill=blue!5, draw=blue!70!black},
    contain/.style={-{Triangle[length=2.5mm, width=2mm, open]}, line width=0.5pt},
    assoc/.style={-{Stealth[length=2mm]}, line width=0.5pt, dashed, blue!70!black}
]

% Row 0
\node[ioc] (top) at (3, 0) {\texttt{Top}};

% Row 1
\node[ioc] (me) at (0, -2) {\texttt{ManagedElement}};
\node[ioc] (sub) at (6, -2) {\texttt{SubNetwork}};

% Row 2
\node[ioc] (gnb) at (0, -4) {\texttt{GnbDuFunction}};
\node[newioc] (agent) at (6, -4) {\texttt{AgentMO}};

% Row 3
\node[newioc] (tool) at (4, -6.5) {\texttt{AgentToolMO}};
\node[newioc] (policy) at (8, -6.5) {\texttt{AgentPolicyMO}};

% Containment arrows (all straight vertical)
\draw[contain] (me) -- (top);
\draw[contain] (sub) -- (top);
\draw[contain] (gnb) -- (me);
\draw[contain] (agent) -- (sub);
\draw[contain] (tool) -- (agent);

% governedBy: horizontal with label placed as standalone node above midpoint
\draw[assoc] (tool) -- (policy);
\node[font=\scriptsize, text=blue!70!black] at (6, -6.1) {\textit{governedBy}};

% appliesTo: orthogonal path - go left then up
\draw[assoc] (tool.west) -- (2, -6.5) -- (2, -4.6) -- (gnb.south);
\node[font=\scriptsize, text=blue!70!black] at (2, -5.5) [left] {\textit{appliesTo}};

% Bounding box for proposed IOCs
\node[draw=blue!70!black, dashed, rounded corners=3pt, inner sep=10pt, fit=(agent)(tool)(policy), label={[font=\scriptsize, text=blue!70!black]below:Illustrative IOCs}] {};

% Legend (single row at bottom)
\node[anchor=north west, font=\scriptsize] at (-1, -8) {%
\begin{tabular}{@{}l@{\hspace{4pt}}l@{\hspace{14pt}}l@{\hspace{4pt}}l@{}}
\tikz[baseline=-0.5ex]\draw[-{Triangle[length=2.5mm, width=2mm, open]}, line width=0.5pt](0,0)--(0.6,0); & Containment &
\tikz[baseline=-0.5ex]\draw[-{Stealth[length=2mm]}, dashed, blue!70!black, line width=0.5pt](0,0)--(0.6,0); & Reference \\
\end{tabular}};
\end{tikzpicture}%
}
\caption{Illustrative IOC placement within the 3GPP NRM hierarchy. Blue-bordered classes are new; solid triangular arrows denote containment (parent--child); dashed arrows denote reference associations.}
\label{fig:nrm_hierarchy}
\end{figure}

\subsection{Core Managed Object for Agent Tool Trust}
\label{sec:framework:mo}

The specific attribute names, state labels, threshold values, and algorithmic details presented in this section represent one illustrative instantiation of the framework concepts. To delineate scope: the \emph{normative contributions} of this work are (i)~the trust state machine structure and its safety properties, (ii)~the cascade propagation mechanism with damped convergence, (iii)~the cross-vendor notification architecture, and (iv)~the retroactive impact assessment approach. The specific attribute names (e.g., \texttt{trustScore}, \texttt{anomalyCount}), penalty formulas, threshold values, and enumeration types are \emph{illustrative} and would be subject to refinement during any standardization process. The specific attribute naming (e.g., \texttt{toolId}, \texttt{trustScore}) follows 3GPP NRM conventions (TS~28.622 naming patterns using lowerCamelCase) but alternative realizations with different granularity or parameterization are equally consistent with the underlying architectural principles.

We propose \texttt{AgentToolMO} as an illustrative first-class managed object within the 3GPP NRM, representing any tool that an agent may invoke to observe or modify network state (Figure~\ref{fig:nrm_hierarchy} shows its placement in the NRM hierarchy). This object captures the complete lifecycle of a tool from onboarding through potential deregistration, with continuous trust assessment (Figure~\ref{fig:deployment_arch}). Throughout this paper, we use the term \emph{tool} as shorthand for an \texttt{AgentToolMO} instance; the two terms are used interchangeably unless a distinction between the managed object class and a specific instance is contextually relevant.

\begin{figure}[t]
\centering
\resizebox{\columnwidth}{!}{%
\begin{tikzpicture}[
    every node/.style={font=\footnotesize},
    box/.style={draw, rectangle, rounded corners=2pt, minimum height=0.6cm, align=center, line width=0.6pt},
    vendordom/.style={draw, rectangle, rounded corners=4pt, minimum width=3.6cm, minimum height=2.4cm, line width=0.8pt, fill=gray!4},
    layer/.style={draw, rectangle, rounded corners=2pt, minimum width=10.5cm, minimum height=0.7cm, line width=0.8pt, align=center},
    arr/.style={-{Stealth[length=2mm]}, line width=0.5pt},
    notif/.style={-{Stealth[length=2mm]}, line width=0.5pt, dashed, red!70!black}
]

% Top: MnS layer
\node[layer, fill=green!5, draw=green!60!black] (mns) at (0, 0) {\textbf{MnS Notification Interface} (TS~28.532)};

% Vendor A - left
\node[vendordom] (va) at (-3.5, -2.6) {};
\node[font=\footnotesize\bfseries] at (-3.5, -1.6) {Vendor A (RAN)};
\node[box, fill=blue!5, draw=blue!60!black, minimum width=3.0cm] (ta) at (-3.5, -2.3) {\texttt{AgentToolMO}};
\node[box, fill=white, minimum width=3.0cm] (aa) at (-3.5, -3.0) {5 Agents, 12 Tools};

% Vendor B - right
\node[vendordom] (vb) at (3.5, -2.6) {};
\node[font=\footnotesize\bfseries] at (3.5, -1.6) {Vendor B (Core)};
\node[box, fill=blue!5, draw=blue!60!black, minimum width=3.0cm] (tb) at (3.5, -2.3) {\texttt{AgentToolMO}};
\node[box, fill=white, minimum width=3.0cm] (ab) at (3.5, -3.0) {5 Agents, 10 Tools};

% Arrows from MnS to vendors
\draw[arr] (-2, -0.35) -- (-2, -1.4) node[midway, above left, font=\scriptsize] {notify};
\draw[arr] (2, -0.35) -- (2, -1.4) node[midway, above right, font=\scriptsize] {notify};

% Cross-vendor notification - centered between boxes with plenty of room
\draw[notif] (-1.7, -2.6) -- (1.7, -2.6);
\node[font=\scriptsize, text=red!70!black] at (0, -2.95) {\textit{notifyTrustDegradation}};

\end{tikzpicture}%
}
\caption{Deployment architecture (simplified; evaluation uses 3 vendors). Each vendor domain maintains local \texttt{AgentToolMO} instances. Trust state changes propagate via the MnS notification interface (top) or direct cross-vendor notifications (dashed). No bilateral integration required.}
\label{fig:deployment_arch}
\end{figure}

\begin{definition}[AgentToolMO]
\label{def:agenttoolmo}
An \texttt{AgentToolMO} is a managed object $\mathcal{O} = (\mathcal{I}, \mathcal{L}, \mathcal{R})$ where $\mathcal{I}$ is the identity attribute group, $\mathcal{L}$ is the lifecycle attribute group, and $\mathcal{R}$ is the trust attribute group. Each instance is uniquely identified within the NRM and subject to the trust governance mechanisms defined in this framework.
\end{definition}

\subsubsection{Identity Attributes}
The identity attribute group $\mathcal{I}$ provides immutable identification of the tool:

Attribute naming follows 3GPP NRM conventions (TS~28.622~\cite{3gpp_ts28.622}): \texttt{vendorName} reuses the existing NRM attribute for managed elements, while \texttt{toolId} and \texttt{toolName} mirror the \texttt{id}/\texttt{userLabel} pattern established for IOCs such as \texttt{ManagedElement} and \texttt{MLEntity}~\cite{3gpp28105}:

\begin{itemize}
    \item \texttt{toolId}: Globally unique identifier (UUID v4) assigned at onboarding time.
    \item \texttt{toolName}: Human-readable name of the tool (e.g., ``CellConfigOptimizer'').
    \item \texttt{toolDescription}: Free-text description of tool capabilities and intended use.
    \item \texttt{vendorName}: Identifier of the vendor that developed and maintains the tool.
    \item \texttt{toolVersion}: Semantic version string following the format \texttt{MAJOR.MINOR.PATCH}.
    \item \texttt{networkAccessScope}: Enumerated set defining the tool's authorized network access boundaries, drawn from $\{\texttt{RAN}, \texttt{Core}, \texttt{Transport}, \texttt{Slice}, \texttt{CrossDomain}\}$.
\end{itemize}

\subsubsection{Lifecycle Attributes}
The lifecycle attribute group $\mathcal{L}$ tracks the temporal progression of the tool:

\begin{itemize}
    \item \texttt{onboardingTimestamp}: ISO~8601 timestamp recording when the tool was first registered in the NRM.
    \item \texttt{learningCompletionTime}: Timestamp when the tool's initial learning period concluded (null if still in learning).
    \item \texttt{lifecycleState}: Current lifecycle state $\in \{\texttt{Active}, \texttt{Suspended}, \texttt{Archived}\}$.
    \item \texttt{deregistrationTime}: Timestamp of deregistration (null if active).
    \item \texttt{deregistrationReason}: Enumerated reason code $\in \{\texttt{OperatorInitiated},$ $\texttt{TrustRevocation}, \texttt{VersionSuperseded},$ $\texttt{VendorWithdrawal}\}$.
\end{itemize}

\subsubsection{Trust Attributes}
The trust attribute group $\mathcal{R}$ maintains the runtime trust posture:

\begin{itemize}
    \item \texttt{trustScore}: Integer value $s \in [0, 100]$ representing the quantified trust score, where 100 indicates maximum trust.
    \item \texttt{trustState}: Current state in the trust state machine $\in S$ (defined in Section~\ref{sec:framework:statemachine}).
    \item \texttt{previousTrustState}: The state immediately prior to the last transition.
    \item \texttt{trustStateTransitionTime}: Timestamp of the most recent state transition.
    \item \texttt{anomalyCount}: Cumulative count of detected behavioral anomalies since last trust recovery.
    \item \texttt{lastAnomalyTime}: Timestamp of the most recently detected anomaly.
    \item \texttt{lastAnomalyType}: Classification of the most recent anomaly. Values: \texttt{ConfigurationDrift}, \texttt{ScopeViolation}, \texttt{KpiRegression}, \texttt{RecommendationInconsistency}, \texttt{CredentialAnomaly}, \texttt{LatencyBreach}, or \texttt{UnauthorizedDataAccess}.
    \item \texttt{revocationCount}: Cumulative count of trust revocations over the tool's lifetime.
    \item \texttt{policyRef}: Reference (DN) to the governing \texttt{AgentPolicyMO} instance.
\end{itemize}

\subsubsection{Trust Score Computation}
The trust score is computed as a weighted combination of behavioral evidence signals, clamped to the range $[0, 100]$:
\begin{equation}
\label{eq:trustscore}
\texttt{trustScore}(t) = \max\left(0,\; 100 - \sum_{i=1}^{N} w_i \cdot p_i(t)\right)
\end{equation}
where $p_i(t) \in [0, 100]$ is the $i$-th penalty component at time $t$ (scaled to a 0--100 range), $w_i$ is its operator-configurable weight ($\sum w_i = 1$), and $N$ is the number of active penalty sources. The $\max(0, \cdot)$ operator ensures the score remains non-negative even under multiple simultaneous penalty sources; the upper bound of 100 is guaranteed by construction since all penalty components are non-negative.

The penalty components are:

\begin{itemize}
    \item $p_{\text{drift}}$: Magnitude of configuration recommendation deviation from approved baseline, normalized by maximum allowed deviation and scaled to $[0, 100]$.
    \item $p_{\text{scope}}$: Binary penalty ($0$ or $100$) for any access to managed elements outside declared \texttt{networkAccessScope}. Resets to $0$ upon operator acknowledgment or after a configurable quarantine period defined in \texttt{AgentPolicyMO}.
    \item $p_{\text{kpi}}$: Normalized KPI regression magnitude observed after tool-initiated changes, measured as $|\Delta \text{KPI}| / \text{KPI}_{\text{baseline}}$ and scaled to $[0, 100]$.
    \item $p_{\text{inconsistency}}$: Ratio of inconsistent recommendations to total recommendations within a sliding window, scaled to $[0, 100]$.
    \item $p_{\text{anomaly}}$: Exponentially decaying penalty per detected anomaly, scaled to $[0, 100]$: $\min(100,\; \sum_{j} e^{-(t - t_j)/\lambda} \times 100)$ where $t_j$ are anomaly timestamps and $\lambda$ is the decay constant.
\end{itemize}

The score starts at 100 (maximum trust) upon onboarding and decreases as penalties accumulate. Recovery occurs as penalties decay over anomaly-free periods. The weights $w_i$ and decay constant $\lambda$ are configured in \texttt{AgentPolicyMO}, enabling operators to prioritize different evidence types based on their risk posture.

\noindent\textbf{Formal penalty definitions.} The following concrete formulations are provided to demonstrate feasibility and enable reproducibility of the evaluation results; alternative formulations satisfying the same architectural requirements (bounded, non-negative penalties on a $[0, 100]$ scale feeding a weighted aggregation) are equally valid within the framework. Each penalty component $p_i \in [0, 100]$ is precisely defined as:
{\small
\begin{align}
p_{\mathrm{drift}}(t) &= \frac{|\mathbf{r}(t) - \mathbf{b}|}{\delta_{\max}} \times 100 \label{eq:pdrift} \\
p_{\mathrm{scope}}(t) &= \begin{cases} 100 & \text{if } \exists\, me \notin \texttt{networkAccessScope} \\ & \quad\text{accessed at } t \\ 0 & \text{otherwise} \end{cases} \label{eq:pscope} \\
p_{\mathrm{kpi}}(t) &= \frac{|\mathrm{KPI}(t) - \mathrm{KPI}_{\mathrm{pre}}|}{\mathrm{KPI}_{\mathrm{baseline}}} \times 100 \label{eq:pkpi} \\
p_{\mathrm{inconsistency}}(t) &= \frac{|\{r_j \in W_s : r_j \neq r_1\}|}{|W_s|} \times 100 \label{eq:pincon} \\
p_{\mathrm{anomaly}}(t) &= \min\!\left(100,\; \sum_{j=1}^{N_a} e^{-(t - t_j)/\lambda} \times 100\right) \label{eq:panomaly}
\end{align}
}
where $\mathbf{r}(t)$ is the tool's recommended configuration vector, $\mathbf{b}$ is the operator-approved baseline (configured per tool in \texttt{AgentPolicyMO}), $\delta_{\max}$ is the maximum approved deviation, $\mathrm{KPI}_{\mathrm{pre}}$ is the KPI value measured immediately before the tool's action, $W_s$ is a sliding window of the $|W_s|$ most recent recommendations (default $|W_s| = 20$), $r_j$ denotes individual recommendations within the window, $N_a$ is the number of anomaly events with timestamps $t_j$, and all $p_i$ are clamped to $[0, 100]$. With $\sum w_i = 1$, a penalty of $p_i = 100$ at weight $w_i$ contributes $w_i \times 100$ points of trust score reduction (e.g., $w_{\mathrm{drift}} = 0.3$ and $p_{\mathrm{drift}} = 60$ yields an 18-point penalty).

We adopt a linear weighted model for trust score computation for three reasons. First, linearity provides \emph{interpretability}: operators can directly reason about the contribution of each evidence type (e.g., ``a scope violation costs 15 points''), which is essential for operational acceptance and auditability in regulated network management environments. Second, the model supports \emph{composability}: new evidence sources can be added by extending the penalty vector without restructuring existing computations. Third, the configurable weights enable \emph{domain adaptation}: different operational contexts (e.g., safety-critical RAN vs.\ best-effort transport) can reflect different risk postures through weight tuning rather than algorithmic redesign. While nonlinear models (e.g., Bayesian inference, neural trust predictors) could capture inter-penalty correlations, they sacrifice the transparency required for cross-vendor interoperability where all parties must agree on how trust scores are derived.

\noindent\textit{Running Example (continued).} After the faulty model update, \texttt{CellConfigOptimizer}'s $p_{\mathrm{drift}}$ rises from 0 to 60 (recommendations deviate 60\% from baseline, scaled to $[0, 100]$). With default weight $w_{\mathrm{drift}} = 0.3$, the trust score drops from 100 to $100 - 0.3 \times 60 = 82$. A subsequent KPI regression ($p_{\mathrm{kpi}} = 40$, weight 0.25) further reduces the score to $82 - 0.25 \times 40 = 72$. An anomaly event ($p_{\mathrm{anomaly}} = 15$, $w_{\mathrm{anomaly}} = 0.2$, penalty $= 3$) then pushes the score to 69, crossing $\theta_{\mathrm{monitor}} = 70$.

\subsubsection{Penalty Independence and Causal Deduplication}
The five penalty components are designed to measure orthogonal evidence dimensions: $p_{\text{drift}}$ measures output quality, $p_{\text{scope}}$ measures access boundary compliance, $p_{\text{kpi}}$ measures downstream impact, $p_{\text{inconsistency}}$ measures output stability, and $p_{\text{anomaly}}$ measures event frequency. When a single root cause (e.g., a corrupted model update) triggers multiple penalty sources simultaneously, a causal deduplication mechanism prevents over-penalization: if $p_{\text{kpi}}$ and $p_{\text{drift}}$ co-occur within a configurable correlation window $\Delta t_{\text{corr}}$ (default: 60\,s) and share a common triggering action in \texttt{AgentActionLogMO}, only the maximum of the correlated penalties is applied rather than their sum. Formally:
\begin{equation}
\label{eq:dedup}
\texttt{effectivePenalty}(t) = \sum_{g \in \mathcal{G}} \max_{i \in g} \left( w_i \cdot p_i(t) \right)
\end{equation}
where $\mathcal{G}$ partitions the active penalties into causally independent groups based on triggering action correlation. In the absence of causal correlation (the common case), each penalty belongs to a singleton group and the formula reduces to standard linear aggregation. This ensures that a single misconfiguration event cannot drive the trust score to zero through multiplicative penalty stacking.

\noindent\textbf{Correlation grouping procedure.} Penalties are grouped into causal sets $\mathcal{G}$ as follows: two penalty sources $p_i$ and $p_j$ are placed in the same group $g \in \mathcal{G}$ if and only if (a)~both were triggered within $\Delta t_{\mathrm{corr}}$ of a common action $\lambda \in \texttt{AgentActionLogMO}$ (matched by \texttt{actionId}), AND (b)~both reference the same \texttt{targetManagedElement}. The correlation window $\Delta t_{\mathrm{corr}} = 60$\,s is derived from the 95th percentile of KPI measurement lag in typical 3GPP performance management reporting intervals (TS~28.552 specifies 15-second granularity periods; 60\,s accommodates up to 4 reporting cycles for delayed KPI manifestation). Groups are computed greedily: each new penalty event is matched against existing groups within the window; unmatched penalties form singleton groups.

The correlation window is configurable in \texttt{AgentPolicyMO} (attribute \texttt{correlationWindowSeconds}, default 60). Operators should tune based on their KPI collection interval:
\begin{center}
\scriptsize
\begin{tabular}{lc}
\toprule
\textbf{KPI Collection Interval} & \textbf{Recommended $\Delta t_{\mathrm{corr}}$} \\
\midrule
15\,s (3GPP default, TS~28.552) & 60\,s ($4\times$ interval) \\
30\,s (extended reporting) & 120\,s \\
60\,s (legacy systems) & 240\,s \\
\bottomrule
\end{tabular}
\end{center}

\subsubsection{Agent Managed Object}
We introduce \texttt{AgentMO} representing agents that consume tools:

\begin{itemize}
    \item \texttt{agentId}, \texttt{agentName}, \texttt{vendorInfo}: Identity attributes.
    \item \texttt{agentType} $\in \{\texttt{Planning}, \texttt{Execution},$ $\texttt{Monitoring}, \texttt{Orchestration}\}$.
    \item \texttt{protocolsSupported[]}: Supported tool invocation protocols.
    \item \texttt{autonomyLevel} $\in [0,5]$: Per the 3GPP autonomy classification~\cite{3gpp28910}.
    \item \texttt{toolList[]}: References to authorized \texttt{AgentToolMO} instances.
    \item \texttt{networkDomain}, \texttt{operationalState}: Operational context.
\end{itemize}

\subsubsection{Session and Action Tracking}
\texttt{AgentSessionMO} tracks active agent-tool interactions:

\begin{itemize}
    \item \texttt{sessionId}, \texttt{agentRef}, \texttt{toolRef}: Identity and references.
    \item \texttt{sessionStartTime}: When the session began.
    \item \texttt{sessionState} $\in \{\texttt{Active},$ $\texttt{Suspended}, \texttt{Terminated}\}$.
    \item \texttt{invocationCount}, \texttt{lastInvocationTime}: Usage statistics.
\end{itemize}

\texttt{AgentActionLogMO} provides auditability for retroactive assessment:

\begin{itemize}
    \item \texttt{actionId}, \texttt{executorRef}, \texttt{toolRef}: Identity and references.
    \item \texttt{actionType} $\in \{\texttt{Read}, \texttt{Write}, \texttt{Execute}\}$.
    \item \texttt{targetManagedElement}: DN of the affected managed element.
    \item \texttt{executionTime}, \texttt{outcome}: When and what resulted.
    \item \texttt{configChangeDetails}: Before/after values of any configuration change.
\end{itemize}

\subsubsection{Relationships}
The \texttt{AgentToolMO} participates in the following NRM relationships:

\begin{itemize}
    \item \texttt{usedByAgents[]}: A list of DN references to \texttt{AgentMO} instances that are authorized consumers of this tool. This is a bidirectional relationship with \texttt{AgentMO.toolList[]}.
    \item \texttt{appliesTo[]}: A list of DN references to managed entities (e.g., \texttt{ManagedElement}, \texttt{SubNetwork}) over which this tool has operational scope.
    \item \texttt{governedBy}: A DN reference to the \texttt{AgentPolicyMO} that defines the trust governance parameters for this tool.
    \item \texttt{dependsOn[]}: A list of DN references to other \texttt{AgentToolMO} instances upon which this tool has functional dependencies. This relationship is critical for trust cascade propagation (Section~\ref{sec:framework:cascade}).
\end{itemize}

\subsection{Graduated Trust State Machine}
\label{sec:framework:statemachine}

The trust lifecycle of each \texttt{AgentToolMO} is governed by a formally defined finite state machine that ensures graduated enforcement - tools cannot transition directly from full trust to revocation without passing through intermediate states that provide opportunity for corrective action.

\begin{definition}[Trust State Machine]
\label{def:statemachine}
The Trust State Machine is a tuple $\mathcal{M} = (S, \Delta, s_0, \mathcal{F})$ where:
\begin{itemize}
    \item $S = \{\texttt{Learning}, \texttt{Trusted}, \texttt{Monitored},$ $\texttt{Restricted}, \texttt{Revoked}, \texttt{Deregistered}\}$ is the finite set of trust states,
    \item $\Delta \subseteq S \times S \times \Gamma \times \mathcal{G} \times \mathcal{K}$ is the set of allowed transitions, where $\Gamma$ is the set of triggers, $\mathcal{G}$ is the set of guard conditions, and $\mathcal{K}$ is the set of transition actions,
    \item $s_0 = \texttt{Learning}$ is the initial state assigned to every newly onboarded tool,
    \item $\mathcal{F} = \{\texttt{Deregistered}\}$ is the set of terminal (absorbing) states.
\end{itemize}
\end{definition}

The six states capture distinct operational regimes, as depicted in Figure~\ref{fig:state_machine}:

\begin{enumerate}
    \item \textbf{Learning} ($s_L$): The tool has been onboarded and is under initial observation. Its behavior is being profiled to establish baseline patterns. Write and execute operations may be permitted under enhanced monitoring, which counts anomalies for the $\tau_1$ graduation guard evaluation.
    \item \textbf{Trusted} ($s_T$): The tool has demonstrated reliable behavior over the required learning period. Full operational privileges are granted with standard monitoring.
    \item \textbf{Monitored} ($s_M$): Anomalous behavior has been detected. The tool retains operational capabilities but monitoring frequency and depth are increased.
    \item \textbf{Restricted} ($s_R$): Continued or severe anomalies have reduced trust below the restriction threshold. Write and execute operations are suspended; only read operations are permitted.
    \item \textbf{Revoked} ($s_V$): Trust has been fully revoked. All sessions are terminated and no operations are permitted. The tool remains in the NRM for audit purposes.
    \item \textbf{Deregistered} ($s_D$): Terminal state. The tool has been permanently removed from active management. Logs are archived.
\end{enumerate}

\begin{figure*}[t]
\centering
\begin{tikzpicture}[
    every node/.style={font=\small},
    state/.style={draw, rectangle, rounded corners=3pt, minimum width=2.4cm, minimum height=0.9cm, font=\small\bfseries, align=center, line width=0.7pt},
    forward/.style={-{Stealth[length=2.5mm]}, line width=0.6pt},
    recovery/.style={-{Stealth[length=2.5mm]}, line width=0.6pt, dashed},
    bypass/.style={-{Stealth[length=2.5mm]}, line width=0.6pt, densely dashdotted},
    lbl/.style={font=\scriptsize, fill=white, inner sep=2pt}
]

% Row 1: Main degradation path (left to right)
\node[state, fill=blue!6, draw=blue!60!black] (L) at (0, 0) {Learning};
\node[state, fill=green!6, draw=green!60!black] (T) at (3.5, 0) {Trusted};
\node[state, fill=yellow!8, draw=yellow!60!black] (M) at (7, 0) {Monitored};
\node[state, fill=orange!8, draw=orange!60!black] (R) at (10.5, 0) {Restricted};
\node[state, fill=red!6, draw=red!60!black] (V) at (14, 0) {Revoked};

% Row 2: Terminal state (centered below)
\node[state, fill=gray!8, draw=gray!60] (D) at (14, -3) {Deregistered};

% === Forward transitions (solid) ===
\draw[forward] (L) -- node[lbl, above] {$\tau_1$} (T);
\draw[forward] (T) -- node[lbl, above] {$\tau_2$} (M);
\draw[forward] (M) -- node[lbl, above] {$\tau_3$} (R);
\draw[forward] (R) -- node[lbl, above] {$\tau_4$} (V);
\draw[forward] (V) -- node[lbl, right] {$\tau_5$} (D);

% === Recovery transitions (dashed, curved above) ===
\draw[recovery] (M.north) to[out=150, in=30] node[lbl, above] {$\tau_6$} (T.north);
\draw[recovery] (R.north) to[out=150, in=30] node[lbl, above] {$\tau_7$} (M.north);
\draw[recovery] (V.south) to[out=-150, in=-30] node[lbl, below] {$\tau_8$} (R.south);

% === Bypass transitions (dash-dot) ===
\draw[bypass] (L.south) to[out=-60, in=180] node[lbl, below, pos=0.7] {$\tau_9$\scriptsize~(critical)} ([yshift=0.3cm]V.west);
\draw[bypass] (L.north east) to[out=25, in=155] node[lbl, above] {$\tau_{10}$} (M.north west);

% === Legend (bottom left, clean) ===
\node[anchor=north west] at (0, -2.2) {%
\begin{tabular}{@{}l@{\hspace{6pt}}l@{\hspace{18pt}}l@{\hspace{6pt}}l@{\hspace{18pt}}l@{\hspace{6pt}}l@{}}
\tikz[baseline=-0.5ex]\draw[forward](0,0)--(0.7,0); & \scriptsize Forward &
\tikz[baseline=-0.5ex]\draw[recovery](0,0)--(0.7,0); & \scriptsize Recovery &
\tikz[baseline=-0.5ex]\draw[bypass](0,0)--(0.7,0); & \scriptsize Bypass \\
\end{tabular}};

\end{tikzpicture}
\caption{Trust state machine. Forward transitions ($\tau_1$--$\tau_5$, solid) represent the lifecycle progression: $\tau_1$ promotes from Learning to Trusted; $\tau_2$--$\tau_4$ handle graduated degradation; $\tau_5$ confirms deregistration. Recovery transitions ($\tau_6$--$\tau_8$, dashed) require operator approval and sustained anomaly-free behavior. Bypass transitions ($\tau_9$--$\tau_{10}$, dash-dotted) handle critical violations during learning. \texttt{Deregistered} is a terminal absorbing state with no outgoing transitions.}
\label{fig:state_machine}
\end{figure*}

\subsubsection{Allowed Transitions}
In our illustrative instantiation, we employ ten transitions $\Delta = \{\tau_1, \tau_2, \ldots, \tau_{10}\}$, each specified as a 5-tuple $(s_{\mathrm{src}}, s_{\mathrm{dst}}, \tau, g, a)$. Table~\ref{tab:transitions} summarizes all transitions.

\begin{table*}[t]
\centering
\caption{Trust State Machine Transitions}
\label{tab:transitions}
\scriptsize
\resizebox{\textwidth}{!}{%
\begin{tabular}{@{}clllll@{}}
\toprule
& \textbf{Transition} & \textbf{Trigger} & \textbf{Guard} & \textbf{Key Actions} \\
\midrule
$\tau_1$ & Learning $\rightarrow$ Trusted & learningComplete $\wedge$ anomalies $< \theta_{\mathrm{learn}}$ & $t_{\mathrm{now}} - t_{\mathrm{onboard}} \geq \Delta t_{\mathrm{minLearn}}$ & Set state, emit notification \\
$\tau_2$ & Trusted $\rightarrow$ Monitored & anomalyDetected & trustScore $< \theta_{\mathrm{monitor}}$ & Increase monitoring frequency \\
$\tau_3$ & Monitored $\rightarrow$ Restricted & repeatedAnomalies $\vee$ policyBreach & trustScore $< \theta_{\mathrm{restrict}}$ AND dwellTime $\geq \Delta t_{\mathrm{grace}}$ & Suspend write/execute, notify agents \\
$\tau_4$ & Restricted $\rightarrow$ Revoked & continuedViolation $\vee$ operatorDecision & trustScore $< \theta_{\mathrm{revoke}}$ AND dwellTime $\geq$ 4\,h & Terminate sessions, emit degradation notif. \\
$\tau_5$ & Revoked $\rightarrow$ Deregistered & operatorConfirmation & all sessions terminated & Archive logs, remove from NRM \\
$\tau_6$ & Monitored $\rightarrow$ Trusted & anomalyFreePeriod & cleanPeriod $\geq \Delta t_{\mathrm{clean}}$ $\wedge$ score $\geq \theta_{\mathrm{trusted}}$ & Reset anomaly count, restore monitoring \\
$\tau_7$ & Restricted $\rightarrow$ Monitored & reVerificationSuccessful & operatorApproval $\wedge$ score $\geq \theta_{\mathrm{monitor}}$ & Restore read ops, begin re-evaluation \\
$\tau_8$ & Revoked $\rightarrow$ Restricted & reOnboardingRequest & operatorApproval $\wedge$ rootCause addressed & Begin supervised re-evaluation \\
$\tau_9$ & Learning $\rightarrow$ Revoked & criticalViolation & severity = Critical & Immediate termination, emit notif. \\
$\tau_{10}$ & Learning $\rightarrow$ Monitored & anomalyDetected (non-critical) & anomalyCount $> \theta_{\mathrm{learn\_monitor}}$ & Escalate monitoring, extend learning \\
\bottomrule
\end{tabular}%
}
\end{table*}

\subsubsection{Safety Properties}
The state machine enforces the following safety invariants:

\begin{definition}[Disallowed Transitions]
\label{def:disallowed}
The following transitions are explicitly prohibited:\normalfont
\begin{enumerate}
    \item $(\texttt{Trusted}, \texttt{Revoked}, \cdot, \cdot, \cdot)$: Direct revocation from Trusted state is disallowed. Trust degradation must proceed through \texttt{Monitored} and \texttt{Restricted} states to ensure graduated enforcement.
    \item $(\texttt{Trusted}, \texttt{Restricted}, \cdot, \cdot, \cdot)$: Direct restriction from Trusted state is disallowed. Degradation must pass through \texttt{Monitored} first.
    \item $(\texttt{Deregistered}, s, \cdot, \cdot, \cdot)$ for any $s \in S$: Deregistered is a terminal absorbing state with no outgoing transitions.
\end{enumerate}
\end{definition}

\begin{theorem}[State Machine Safety]
\label{thm:safety}
The trust state machine $\mathcal{M}$ is \emph{safe} if and only if for every execution trace $\pi = s_0, s_1, s_2, \ldots$ of the system, every consecutive pair $(s_i, s_{i+1})$ corresponds to an allowed transition in $\Delta$. The formal proof is provided in Section~\ref{sec:formal_analysis} (Theorem~\ref{thm:no_unsafe}).
\end{theorem}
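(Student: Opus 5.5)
The statement is essentially definitional: it says what ``safe'' means. What has to be established is that $\mathcal{M}$, as instantiated in Table~\ref{tab:transitions}, actually satisfies that trace condition, and that this condition excludes the transitions of Definition~\ref{def:disallowed}. I would make the operational semantics explicit first. A configuration is the trust state tuple $\sigma(t_j)=(s,\mathbf{e},\tau,\rho)$. The only way the component $s$ may change is by firing some $(s_{\mathrm{src}},s_{\mathrm{dst}},\gamma,g,a)\in\Delta$ with $s=s_{\mathrm{src}}$, trigger $\gamma$ present and guard $g$ true on $\mathbf{e}$. Every other event, such as a penalty update, a score recomputation or an incoming notification, modifies only $\mathbf{e}$ and leaves $s$ fixed.

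With this semantics, the forward direction is an induction on the length of a trace $\pi=s_0,s_1,\ldots$.
\begin{itemize}
\item \emph{Base case.} $s_0=\texttt{Learning}$ by Definition~\ref{def:statemachine}.
\item \emph{Inductive step.} If the prefix $s_0,\ldots,s_i$ consists of allowed steps, the next step is produced either by an evidence-only event, giving a stutter $s_{i+1}=s_i$ (which I would drop, or equivalently treat as a non-transition), or by firing a member of $\Delta$. In the latter case $(s_i,s_{i+1})$ is allowed by construction.
\end{itemize}
The converse direction is immediate from the definition.

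Next I would read off the source/destination pairs of $\tau_1,\ldots,\tau_{10}$ to verify that no disallowed pair occurs. In particular:
\begin{itemize}
\item the only outgoing edge from \texttt{Trusted} is $\tau_2$, to \texttt{Monitored};
\item no tuple has source \texttt{Deregistered};
\item the outgoing edges of \texttt{Restricted} and \texttt{Revoked} go only to adjacent states ($\tau_4,\tau_7,\tau_8$).
\end{itemize}
This gives the corollaries the paper needs. There is no jump from \texttt{Restricted} or \texttt{Revoked} to \texttt{Trusted} without passing through \texttt{Monitored}, and hence without the guard of $\tau_6$ (clean period together with $\texttt{trustScore}\geq\theta_{\mathrm{trusted}}$). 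This is exactly the evidence requirement of R2. \texttt{Deregistered} is absorbing.

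The main obstacle is justifying the semantic assumption that \emph{all} writes to \texttt{trustState} go through $\Delta$. Three mechanisms threaten it: concurrent triggers, cascade propagation from dependent tools, and provisioning MnS operations by an operator. My plan is to model \texttt{trustState} as a read-only attribute whose updates are serialized through a single transition function. Cascade effects (Section~\ref{sec:framework:cascade}) and operator actions then act only as triggers or evidence updates, never as direct assignments. Simultaneous triggers are linearized so that each atomic step fires at most one transition. If a cascade could demand a multi-level drop, for example from \texttt{Trusted} to \texttt{Restricted}, I would show that it decomposes into successive single allowed steps. The guard of $\tau_3$ includes a dwell time, so such a drop cannot even happen in one atomic step.
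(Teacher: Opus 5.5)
Your proposal is correct and is essentially the paper's argument for Theorem~\ref{thm:no_unsafe}: your premise that \texttt{trustState} changes only by firing a member of $\Delta$ whose source is the current state is the paper's Step~1, where all other events are either discarded or touch only evidence, and your read-off of outgoing edges is its Step~2. Your induction on trace length, stutter-step handling and linearization of cascade and operator-triggered updates simply make Step~3 rigorous; note only that for the chained transitions of Algorithm~\ref{alg:policyreeval}, which override dwell times, your decomposition into single allowed steps, not the $\tau_3$ dwell guard, is what carries the argument.
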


\subsubsection{Mandatory Dwell Times}
To prevent rapid oscillation and ensure operators have time to assess degradation, the state machine enforces minimum residence durations before degradation transitions fire. Specifically:
\begin{itemize}
    \item $\tau_2$ (Trusted$\rightarrow$Monitored): No minimum dwell (immediate detection is desirable).
    \item $\tau_3$ (Monitored$\rightarrow$Restricted): Requires $\Delta t_{\mathrm{grace}}$ residence in Monitored (default 24\,h), allowing transient anomalies to decay before escalation.
    \item $\tau_4$ (Restricted$\rightarrow$Revoked): Requires minimum 4\,h in Restricted state, ensuring the restriction was insufficient before revocation.
\end{itemize}
The grace period $\Delta t_{\mathrm{grace}}$ is configured in \texttt{AgentPolicyMO}. During the dwell period, the tool operates under the current state's enforcement constraints. If the trust score recovers above the threshold before the dwell timer expires, the pending degradation transition is cancelled. This prevents single transient events (e.g., a brief KPI spike during planned maintenance) from triggering inappropriate state changes.

\noindent\textit{Running Example (continued).} When the score drops to 69, transition $\tau_2$ fires: \texttt{CellConfigOptimizer} moves from Trusted to Monitored. Vendor~B's management system increases monitoring frequency. After the mandatory dwell period ($\Delta t_{\mathrm{grace}} = 24$\,h) with no recovery, continued drift pushes the score to 45, triggering $\tau_3$: the tool enters Restricted state. Write and execute operations are suspended---the tool can only perform read operations.

\subsubsection{State Machine Execution Model}
When a trust event causes the score to cross multiple thresholds simultaneously (e.g., score drops from 80 to 20, crossing $\theta_{\mathrm{monitor}}$, $\theta_{\mathrm{restrict}}$, and $\theta_{\mathrm{revoke}}$), the state machine evaluates transitions \emph{sequentially from the current state}. Only the immediate next transition fires: a tool in Trusted with score below $\theta_{\mathrm{monitor}}$ transitions to Monitored ($\tau_2$), regardless of how far below the score has fallen. Subsequent transitions ($\tau_3$, $\tau_4$) require their respective dwell time guards to be satisfied in future evaluation cycles. This prevents a single catastrophic penalty from bypassing graduated enforcement. The exception is policy-triggered re-evaluation (Algorithm~4), which explicitly chains transitions to reach the target state consistent with $f(score, \mathcal{P})$ since the operator has consciously modified governance parameters.

\subsection{Trust Cascade Propagation with Damping}
\label{sec:framework:cascade}

In multi-vendor autonomous networks, tools frequently have functional dependencies on other tools. When a tool's trust is degraded, dependent tools may also warrant reduced trust, since their outputs may have been influenced by the compromised tool. To address cascade containment, we propose a damped propagation approach that limits blast radius while providing downstream notification.

\begin{definition}[Tool Dependency Graph]
\label{def:depgraph}
The tool dependency graph is a directed graph $G = (V, E)$ where $V$ is the set of all \texttt{AgentToolMO} instances and $E = \{(T_i, T_j) \mid T_j \in T_i.\texttt{dependsOn[]}\}$ represents functional dependencies. The graph is enforced to be acyclic: the onboarding validation procedure rejects any \texttt{dependsOn[]} declaration that would create a cycle (verified by DFS cycle detection at registration time). The cascade algorithm (Algorithm~\ref{alg:cascade}) additionally employs a visited set that prevents re-processing in the presence of any undetected cycle.
\end{definition}

\begin{definition}[Cascade Penalty Function]
\label{def:cascadepenalty}
When tool $T_i$ experiences trust degradation with base penalty $P_0$, the propagated penalty applied to a dependent tool $T_j$ at hop distance $h(T_i, T_j)$ is:
\begin{equation}
\Delta S(T_j) = P_0 \cdot \gamma^{h(T_i, T_j)}
\end{equation}
where $\gamma \in (0, 1)$ is the damping factor and $h(T_i, T_j)$ is the shortest path distance from $T_i$ to $T_j$ in the reverse dependency graph $G^{-1}$.
\end{definition}

The damping factor $\gamma$ controls how rapidly the cascade attenuates. With the default value $\gamma = 0.5$:
\begin{itemize}
    \item Hop 1 (direct dependents): penalty $= 0.5 \cdot P_0$
    \item Hop 2: penalty $= 0.25 \cdot P_0$
    \item Hop 3: penalty $= 0.125 \cdot P_0$
\end{itemize}

A configurable maximum cascade depth $D_{\max}$ (default $= 3$) bounds the propagation:
\begin{equation}
\Delta S(T_j) = \begin{cases}
P_0 \cdot \gamma^{h(T_i, T_j)} & \text{if } h(T_i, T_j) \leq D_{\max} \\
0 & \text{otherwise}
\end{cases}
\end{equation}

The threshold function $\theta(\cdot)$ maps each trust state to its demotion threshold: $\theta(\texttt{Trusted}) = \theta_{\mathrm{monitor}}$, $\theta(\texttt{Monitored}) = \theta_{\mathrm{restrict}}$, $\theta(\texttt{Restricted}) = \theta_{\mathrm{revoke}}$.

\begin{algorithm}[t]
\caption{Trust Cascade Propagation}
\label{alg:cascade}
\textbf{Execution flow:} The algorithm performs a bounded breadth-first traversal of the reverse dependency graph starting from the degraded tool. At each hop, it applies a geometrically attenuated penalty ($\gamma^{d+1}$) to each unvisited dependent tool, updates its trust score, and triggers a state transition if the score falls below the current state's threshold. A visited set ensures each tool is penalized at most once, and the traversal halts at depth $D_{\max}$.
\begin{algorithmic}[1]
\REQUIRE Degraded tool $T_i$, base penalty $P_0$, damping factor $\gamma$, max depth $D_{\max}$
\ENSURE Updated trust scores for all affected tools
\STATE $Q \leftarrow$ empty queue
\STATE $visited \leftarrow \emptyset$
\STATE \textbf{enqueue} $(T_i, 0)$ into $Q$
\STATE $visited \leftarrow visited \cup \{T_i\}$
\WHILE{$Q$ is not empty}
    \STATE $(T_{\mathrm{curr}}, d) \leftarrow$ \textbf{dequeue} from $Q$
    \IF{$d \geq D_{\max}$}
        \STATE \textbf{continue}
    \ENDIF
    \FOR{\textbf{each} $T_j$ such that $T_{\mathrm{curr}} \in T_j.\texttt{dependsOn[]}$}
        \IF{$T_j \notin visited$}
            \STATE $penalty \leftarrow P_0 \cdot \gamma^{(d+1)}$
            \STATE $T_j.\texttt{trustScore} \leftarrow \max(0,\ T_j.\texttt{trustScore} - penalty)$
            \STATE $visited \leftarrow visited \cup \{T_j\}$
            \STATE \textbf{enqueue} $(T_j, d+1)$ into $Q$
            \IF{$T_j.\texttt{trustScore} < \theta(T_j.\texttt{trustState})$}
                \STATE \textbf{trigger} state transition for $T_j$ per $\mathcal{M}$
            \ENDIF
        \ENDIF
    \ENDFOR
\ENDWHILE
\end{algorithmic}
\end{algorithm}

\begin{theorem}[Cascade Convergence]
\label{thm:convergence}
Algorithm~\ref{alg:cascade} terminates in at most $|V|$ iterations and the total penalty applied at depth $d$ converges to zero:
\begin{equation}
\lim_{d \rightarrow \infty} P_0 \cdot \gamma^d = 0 \quad \text{for } \gamma \in (0, 1)
\end{equation}
With $D_{\max}$ bounding the depth, termination is guaranteed in $\mathcal{O}(|V| + |E|)$ time.
\end{theorem}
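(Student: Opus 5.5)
The plan is to argue termination and the complexity bound from the visited-set invariant of Algorithm~\ref{alg:cascade}, and to obtain the limit statement from elementary properties of geometric sequences. First I would establish the invariant: a tool is inserted into $visited$ exactly when it is enqueued (lines 3--4 for $T_i$, and lines 14--15 for each dependent $T_j$), and membership in $visited$ is never removed. Since the enqueue of $T_j$ is guarded by $T_j \notin visited$, every vertex of $V$ is enqueued at most once. Each iteration of the \textbf{while} loop dequeues exactly one element, so the loop body executes at most $|V|$ times. This proves the $|V|$-iteration bound regardless of whether $G$ is acyclic. In particular, an undetected cycle cannot cause re-processing, which is the robustness remark in Definition~\ref{def:depgraph}.

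For the $\mathcal{O}(|V|+|E|)$ bound, I would charge work to edges. When $T_{\mathrm{curr}}$ is dequeued, the inner \textbf{for} loop scans the in-neighbours of $T_{\mathrm{curr}}$ in $G$, i.e.\ its out-neighbours in $G^{-1}$. Because each vertex is dequeued at most once, each edge $(T_j,T_{\mathrm{curr}})\in E$ is examined at most once over the whole run. Each examination costs $O(1)$: a membership test, one multiplication (with $\gamma^{d+1}$ maintainable incrementally), a clamp, and a threshold comparison. I would state the assumption that the trigger on line 17 costs $O(1)$, since by the execution model of $\mathcal{M}$ only the immediate next transition fires. I would also assume the reverse adjacency lists are available, which can be built once in $O(|V|+|E|)$ from the \texttt{dependsOn[]} references. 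Summing the per-vertex and per-edge costs gives $O(|V|+|E|)$. The $D_{\max}$ check only prunes work, so it does not affect the bound, but it does guarantee that no penalty is applied beyond hop $D_{\max}$.

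For the penalty claim, I would first note that BFS from $T_i$ in $G^{-1}$ assigns to each visited $T_j$ the depth $d+1 = h(T_i,T_j)$. This is the standard BFS shortest-path property, since vertices are enqueued in nondecreasing depth and marked on first discovery. Hence the applied penalty equals $P_0\gamma^{h}$, matching Definition~\ref{def:cascadepenalty}. The limit then follows from $0<\gamma<1$: $\gamma^d = e^{d\ln\gamma}\to 0$, so $P_0\gamma^d\to 0$. I would also add the stronger bound that the total penalty injected is at most $\sum_{d=1}^{D_{\max}} n_d P_0\gamma^d$, where $n_d$ is the number of tools at distance $d$. This sum is finite, and it is at most $P_0\gamma/(1-\gamma)$ per unit fan-out when the fan-out is bounded.

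The main obstacle is interpretive rather than technical. "Converges" in the statement is really a termination-plus-decay claim, not a fixed-point convergence. The point I would have to argue carefully is that triggered state transitions do not themselves re-invoke the cascade recursively within the same run. If they did, the $|V|$ bound would fail. I would resolve this by reading line 17 as a local transition per $\mathcal{M}$ that enqueues no new cascade, and state that assumption explicitly.
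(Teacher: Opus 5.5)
Your proof is correct and follows the same route as the paper's: the visited set forces each tool to be enqueued and expanded at most once, which gives termination within $|V|$ iterations and $\mathcal{O}(|V|+|E|)$ time, and $0<\gamma<1$ gives the geometric limit; you also supply details the paper leaves implicit (reverse adjacency lists, $O(1)$ cost of the trigger, the BFS identification $d+1=h(T_i,T_j)$, and the no-recursive-cascade reading, which the paper itself adopts in its Concurrent Cascade Independence remark). One small caution: your optional aside bounding the injected penalty by $P_0\gamma/(1-\gamma)$ ``per unit fan-out'' holds only along a single chain, since with fan-out $k$ the depth-$d$ total can reach $k^dP_0\gamma^d$, which that quantity does not bound when $k\gamma\ge 1$, though the theorem does not depend on this aside.
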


\begin{proof}
The algorithm performs a breadth-first traversal of the reverse dependency graph $G^{-1}$ to depth $D_{\max}$. Each node is visited at most once (ensured by the $visited$ set). Since $G$ is finite with $|V|$ nodes and $|E|$ edges, and depth is bounded by $D_{\max}$, the algorithm terminates. The geometric decay of $\gamma^d$ ensures penalties become negligible, preventing unbounded score degradation across the network. \qed
\end{proof}

\noindent\textit{Running Example (cascade).} \texttt{CellConfigOptimizer}'s transition to Monitored triggers Algorithm~\ref{alg:cascade}. A dependent tool, \texttt{SliceResourceAllocator} (which consumes tilt recommendations to adjust slice resource allocation), receives a cascade penalty of $P_0 \cdot \gamma = 18 \times 0.5 = 9$ points at depth 1. Its score drops from 95 to 86---still above $\theta_{\mathrm{monitor}} = 70$, so no state change occurs but the consuming agent is alerted to reduced confidence in the dependency chain.

\subsection{Retroactive Impact Assessment}
\label{sec:framework:impact}

When a tool's trust is degraded, the immediate concern extends beyond preventing future damage: past actions performed by the tool during the period of undetected compromise must be identified and assessed. The framework provides a systematic mechanism for retroactive impact assessment by leveraging the \texttt{AgentActionLogMO} and the NRM dependency graph.

\begin{definition}[Lookback Window]
\label{def:lookback}
The lookback window $W_{\mathrm{lb}}$ for a trust degradation event at time $t_{\mathrm{event}}$ is defined as:
\begin{equation}
W_{\mathrm{lb}} = [t_{\mathrm{event}} - \Delta t_{\mathrm{lb}},\ t_{\mathrm{event}}]
\end{equation}
where $\Delta t_{\mathrm{lb}}$ is a configurable duration (governed by \texttt{AgentPolicyMO}) representing the maximum period during which the tool's behavior may have been compromised prior to detection.
\end{definition}

\begin{definition}[Affected Action Set]
\label{def:affectedactions}
The affected action set $\Lambda_{\mathrm{aff}}$ for tool $T$ within lookback window $W_{\mathrm{lb}}$ is:
\begin{multline}
\Lambda_{\mathrm{aff}}(T, W_{\mathrm{lb}}) = \{\lambda \in \texttt{AgentActionLogMO} \mid \\
\lambda.\texttt{toolRef} = T \wedge \lambda.\texttt{executionTime} \in W_{\mathrm{lb}} \\
\wedge\; \lambda.\texttt{actionType} \in \{\texttt{Write}, \texttt{Execute}\}\}
\end{multline}
\end{definition}

From the affected action set, we derive the set of affected managed elements and subsequently the affected services through NRM graph traversal.

\begin{algorithm}[t]
\caption{Retroactive Impact Assessment}
\label{alg:impact}
\textbf{Execution flow:} Given a degraded tool and lookback window, the algorithm first queries the action log for all write/execute actions performed by that tool within the window. It then extracts the set of managed elements affected by those actions and, for each element, invokes the \textsc{TraverseImpact} subroutine (Algorithm~\ref{alg:traverseimpact}) to discover all services reachable through NRM dependency relationships.
\begin{algorithmic}[1]
\REQUIRE Degraded tool $T$, lookback duration $\Delta t_{\mathrm{lb}}$, NRM graph $G_{\mathrm{NRM}}$
\ENSURE Affected service set $\mathcal{S}_{\mathrm{aff}}$, affected element set $\mathcal{E}_{\mathrm{aff}}$
\STATE $t_{\mathrm{event}} \leftarrow$ current timestamp
\STATE $\Lambda_{\mathrm{aff}} \leftarrow \{\lambda \mid \lambda.\texttt{toolRef} = T \wedge \lambda.\texttt{executionTime} \geq t_{\mathrm{event}} - \Delta t_{\mathrm{lb}} \wedge \lambda.\texttt{actionType} \in \{\texttt{Write}, \texttt{Execute}\}\}$
\STATE $\mathcal{E}_{\mathrm{aff}} \leftarrow \{\lambda.\texttt{targetManagedElement} \mid \lambda \in \Lambda_{\mathrm{aff}}\}$
\STATE $\mathcal{S}_{\mathrm{aff}} \leftarrow \emptyset$
\FORALL{$me \in \mathcal{E}_{\mathrm{aff}}$}
    \STATE $\mathcal{S}_{\mathrm{aff}} \leftarrow \mathcal{S}_{\mathrm{aff}}\ \cup$ \textsc{TraverseImpact}($me$, $G_{\mathrm{NRM}}$)
\ENDFOR
\RETURN $(\mathcal{S}_{\mathrm{aff}}, \mathcal{E}_{\mathrm{aff}}, \Lambda_{\mathrm{aff}})$
\end{algorithmic}
\end{algorithm}

\begin{algorithm}[t]
\caption{TraverseImpact  -  DFS Service Impact Discovery}
\label{alg:traverseimpact}
\textbf{Execution flow:} Starting from a given managed element, this subroutine performs a depth-first traversal upward through the NRM containment and dependency graph. It follows \texttt{serves} and \texttt{dependsOn} relationships, collecting any \texttt{NetworkService} or \texttt{NetworkSlice} nodes encountered. A visited set prevents cycles.
\begin{algorithmic}[1]
\REQUIRE Managed element $me$, NRM dependency graph $G_{\mathrm{NRM}}$
\ENSURE Set of affected services
\STATE $services \leftarrow \emptyset$
\STATE $stack \leftarrow \{me\}$
\STATE $visited \leftarrow \emptyset$
\WHILE{$stack \neq \emptyset$}
    \STATE $node \leftarrow$ \textbf{pop} from $stack$
    \IF{$node \in visited$}
        \STATE \textbf{continue}
    \ENDIF
    \STATE $visited \leftarrow visited \cup \{node\}$
    \IF{$node$ is of type \texttt{NetworkService} or \texttt{NetworkSlice}}
        \STATE $services \leftarrow services \cup \{node\}$
    \ENDIF
    \FORALL{$parent$ such that $parent \xrightarrow{\texttt{serves}} node$ or $parent \xrightarrow{\texttt{dependsOn}} node$}
        \STATE $stack \leftarrow stack \cup \{parent\}$
    \ENDFOR
\ENDWHILE
\RETURN $services$
\end{algorithmic}
\end{algorithm}

The traversal follows the NRM containment and dependency relationships:
\begin{multline}
\texttt{ManagedElement} \xrightarrow{\texttt{hosts}} \texttt{ManagedFunction} \\
\xrightarrow{\texttt{serves}} \texttt{NetworkSliceSubnet} \xrightarrow{\texttt{partOf}} \\
\texttt{NetworkSlice} \xrightarrow{\texttt{supports}} \texttt{Service}
\end{multline}

The resulting affected service set $\mathcal{S}_{\mathrm{aff}}$ and affected action set $\Lambda_{\mathrm{aff}}$ are consumed by the cross-vendor trust notification mechanism (Section~\ref{sec:framework:notification}), enabling operators and consuming management systems to perform targeted rollback of configuration changes made during the lookback window, rather than requiring blanket rollback of all recent changes across the network.

\noindent\textit{Running Example (concluded).} The retroactive impact assessment queries \texttt{AgentActionLogMO} for the 72-hour lookback window, identifying 847 configuration changes made by \texttt{CellConfigOptimizer} across 12 \texttt{NRCellDU} instances. NRM graph traversal (Algorithm~\ref{alg:traverseimpact}) traces these to 3 affected \texttt{NetworkSlice} instances, enabling the operator to prioritize rollback of the most impactful parameter changes rather than reverting all recent configurations.

\subsection{Cross-Vendor Trust Notification}
\label{sec:framework:notification}

A defining requirement of multi-vendor autonomous networks is that trust degradation information must reach all affected management systems regardless of vendor boundaries. We propose a notification approach aligned with the 3GPP Management Services (MnS) notification interface specified in TS~28.532~\cite{3gpp28532}.

\begin{definition}[Trust Degradation Notification]
\label{def:notification}
The \texttt{notifyAgentTrustDegradation} notification is a structured message $\mathcal{N}$ emitted upon any trust state transition that reduces operational capability. The notification incorporates the lookback window $W_{\mathrm{lb}}$ (Definition~\ref{def:lookback}) and affected action set $\Lambda_{\mathrm{aff}}$ (Definition~\ref{def:affectedactions}) from the retroactive impact assessment. The notification payload is defined as:
\begin{multline}
\mathcal{N} = (id_t, n_t, v, s_{\mathrm{prev}}, s_{\mathrm{new}}, \xi_{\mathrm{type}}, \xi_{\mathrm{reason}}, \\
\mathcal{A}_{\mathrm{aff}}, \mathcal{S}_{\mathrm{aff}}, \Delta t_{\mathrm{lb}}, R[], t_s, \mathcal{C})
\end{multline}
where $\mathcal{A}_{\mathrm{aff}}$ denotes the set of affected agents (consuming agent DNs whose tools depend on the degraded tool) and $\mathcal{S}_{\mathrm{aff}}$ is the affected service set derived from the retroactive impact assessment.
\end{definition}

The notification payload fields are: \texttt{toolId}, \texttt{toolName}, \texttt{vendorName}, \texttt{previousState}, \texttt{newState}, \texttt{triggerType} $\in \{\texttt{AnomalyDetection},$ $\texttt{PolicyBreach}, \texttt{OperatorAction},$ $\texttt{CascadePropagation}\}$, \texttt{triggerReason} (detailed description), \texttt{affectedAgents[]} (consuming agent DNs), \texttt{affectedServices[]} (impacted services from retroactive assessment), \texttt{lookbackWindowDuration}, \texttt{recommendedActions[]}, \texttt{timestamp}, and \texttt{cascadeInfo} (source tool, hop distance, penalty applied).

\subsubsection{Delivery and Subscription}
Notifications are delivered via the MnS notification interface (RESTful HTTP callbacks, message bus, or file-based). Any management system subscribing to trust state changes for the affected tool or its consuming agents receives the notification regardless of vendor. Subscriptions specify scope by tool instance, vendor domain, consuming agent, or severity threshold.

\noindent\textit{Running Example (continued).} Upon the Trusted$\rightarrow$Monitored transition, Vendor~B's system emits a \texttt{notifyAgentTrustDegradation} notification. Vendor~A's \texttt{SliceOrchestrator} receives this within MnS notification delivery time and immediately stops consuming \texttt{CellConfigOptimizer}'s recommendations, falling back to its local configuration baseline.

\subsubsection{Cross-Vendor Semantics}
The cross-vendor interface exposes both the discrete \texttt{trustState} and the continuous \texttt{trustScore}. The \emph{state} is the normative cross-vendor signal: a tool in \texttt{Restricted} state has suspended write/execute operations regardless of how the owning domain computed its thresholds. Consuming domains use the state for enforcement decisions (e.g., stop invoking a Restricted tool) and may use the score for informational purposes. This design allows each domain to maintain domain-specific thresholds in its \texttt{AgentPolicyMO} while the state machine produces universally meaningful enforcement outcomes.

\subsection{Operational Integration}
\label{sec:framework:operational}

The trust state machine integrates with three operational aspects of agent-tool interaction:

\textbf{Session Management.} Trust state transitions trigger automatic session actions: transitions to Restricted suspend active sessions with the affected tool; transitions to Revoked terminate sessions and invalidate cached results; recovery to Trusted re-enables session creation. These actions are mediated through the existing MnS session management interfaces.

\textbf{Tool Discovery.} When agents query available tools through the NRM, trust metadata (current state, score, last transition time) is included in discovery responses. Agents can filter available tools by minimum trust state, ensuring that only sufficiently trusted tools are considered for invocation. This prevents agents from discovering and invoking tools that are in Restricted or Revoked states.

\textbf{Protocol Independence.} Trust visibility is decoupled from the tool invocation protocol. Whether an agent invokes a tool via RESTful MnS, gRPC, or proprietary interfaces, the trust state is maintained at the NRM level and accessible through standard management queries. This ensures trust governance applies uniformly regardless of the operational protocol used for tool execution.

\subsection{Operator-Configurable Trust Governance}
\label{sec:framework:governance}

The trust framework must be adaptable to diverse operational requirements without requiring software modifications. We propose an \texttt{AgentPolicyMO} managed object as an illustrative governance mechanism that enables operators to configure trust parameters at runtime, with automatic re-evaluation of affected tools upon policy changes.

\begin{definition}[AgentPolicyMO]
\label{def:agentpolicymo}
An \texttt{AgentPolicyMO} instance $\mathcal{P}$ defines the governance parameters for a set of tools within its scope:
\begin{multline}
\mathcal{P} = (\theta_{\mathrm{trusted}}, \theta_{\mathrm{monitor}}, \theta_{\mathrm{restrict}}, \theta_{\mathrm{revoke}}, \theta_{\mathrm{learn}}, \\
\theta_{\mathrm{learn\_monitor}}, \Delta t_{\mathrm{minLearn}}, \Delta t_{\mathrm{cleanPeriod}}, \Delta t_{\mathrm{grace}}, \\
\gamma, D_{\max}, \Delta t_{\mathrm{lb}}, scope)
\end{multline}
where:
\begin{itemize}
    \item $\theta_{\mathrm{trusted}}$: Minimum trust score for \texttt{Trusted} state (default: 80)
    \item $\theta_{\mathrm{monitor}}$: Threshold below which monitoring is escalated (default: 70)
    \item $\theta_{\mathrm{restrict}}$: Threshold below which operations are restricted (default: 50)
    \item $\theta_{\mathrm{revoke}}$: Threshold below which trust is revoked (default: 20)
    \item $\theta_{\mathrm{learn}}$: Maximum anomaly count for learning graduation via $\tau_1$ (default: 2)
    \item $\theta_{\mathrm{learn\_monitor}}$: Anomaly count triggering learning escalation via $\tau_{10}$ (default: 5)
    \item $\Delta t_{\mathrm{minLearn}}$: Minimum learning period duration (default: 7 days)
    \item $\Delta t_{\mathrm{cleanPeriod}}$: Required anomaly-free period for recovery (default: 14 days)
    \item $\Delta t_{\mathrm{grace}}$: Grace period before escalation after first anomaly (default: 24 hours)
    \item $\gamma$: Cascade damping factor (default: 0.5)
    \item $D_{\max}$: Maximum cascade propagation depth (default: 3)
    \item $\Delta t_{\mathrm{lb}}$: Lookback window duration for impact assessment (default: 72 hours)
    \item $scope$: Set of \texttt{AgentToolMO} DNs governed by this policy
\end{itemize}
\end{definition}

The threshold ordering invariant must be maintained:
\begin{equation}
\theta_{\mathrm{revoke}} < \theta_{\mathrm{restrict}} < \theta_{\mathrm{monitor}} < \theta_{\mathrm{trusted}} \leq 100
\end{equation}

\subsubsection{Threshold Selection Methodology}
The default thresholds ($\theta_{\mathrm{trusted}}=80$, $\theta_{\mathrm{monitor}}=70$, $\theta_{\mathrm{restrict}}=50$, $\theta_{\mathrm{revoke}}=20$) are derived from two principles: (1)~\emph{graduated spacing}: each threshold pair maintains a minimum 20-point gap, ensuring that a single penalty event (maximum individual penalty $\leq 30$ points with default weights) cannot cause a tool to skip an intermediate state; and (2)~\emph{recovery feasibility}: the gap between $\theta_{\mathrm{revoke}}$ and 0 (20 points) ensures that tools near revocation have a recovery buffer where anomaly-free decay can restore them before the final threshold is crossed.

For safety-critical slices (URLLC), operators should raise thresholds (e.g., $\theta_{\mathrm{monitor}}=85$, $\theta_{\mathrm{restrict}}=70$) to trigger earlier intervention. For best-effort services (mMTC), lower thresholds (e.g., $\theta_{\mathrm{monitor}}=60$, $\theta_{\mathrm{restrict}}=40$) reduce false positive interventions at the cost of longer exposure windows. The constraint $\theta_{\mathrm{revoke}} < \theta_{\mathrm{restrict}} < \theta_{\mathrm{monitor}} < \theta_{\mathrm{trusted}}$ must always hold.

\noindent\textbf{Weight-aware gap guidance.} When operators reconfigure penalty weights, threshold gaps should account for the maximum single-event penalty contribution. Since penalties scale to $[0, 100]$ and the maximum weight determines the worst-case single-event impact ($\max_i(w_i) \times 100$ points), operators should verify that threshold gaps exceed their deployment's typical penalty magnitudes. For the default configuration ($\max_i w_i = 0.3$, typical penalties in $[10, 60]$), the maximum expected single-event contribution is $0.3 \times 60 = 18$ points, which the default threshold gaps (10, 20, 30 points) accommodate for all but extreme events. The \emph{State Machine Execution Model} (Section~IV-B) ensures that even if a score crosses multiple thresholds in a single event, only the immediate next transition fires, preserving graduated enforcement regardless of gap size.

\subsubsection{Dynamic Policy Re-evaluation}
Upon modification of an \texttt{AgentPolicyMO} instance, the system automatically re-evaluates all \texttt{AgentToolMO} instances within the policy's scope. This ensures that trust governance remains consistent without manual intervention.

\begin{algorithm}[t]
\caption{Policy Change Re-evaluation}
\label{alg:policyreeval}
\textbf{Execution flow:} When a policy is modified, this algorithm iterates over all tools within the policy's scope, comparing each tool's current trust score against the new thresholds. If a tool's score is inconsistent with its current state under the updated policy, the algorithm triggers the appropriate state machine transitions---always respecting the graduated enforcement constraint by traversing intermediate states sequentially.
\begin{algorithmic}[1]
\REQUIRE Modified policy $\mathcal{P}_{\mathrm{new}}$, previous policy $\mathcal{P}_{\mathrm{old}}$
\ENSURE All tools in scope have consistent trust states
\FORALL{$T \in \mathcal{P}_{\mathrm{new}}.scope$}
    \STATE $s_{\mathrm{current}} \leftarrow T.\texttt{trustState}$
    \STATE $score \leftarrow T.\texttt{trustScore}$
    \IF{$s_{\mathrm{current}} = \texttt{Trusted}$ and $score < \mathcal{P}_{\mathrm{new}}.\theta_{\mathrm{revoke}}$}
        \STATE \textbf{trigger} $\tau_2$, $\tau_3$, then $\tau_4$: transition $T$ to \texttt{Revoked}
    \ELSIF{$s_{\mathrm{current}} = \texttt{Trusted}$ and $score < \mathcal{P}_{\mathrm{new}}.\theta_{\mathrm{restrict}}$}
        \STATE \textbf{trigger} $\tau_2$ then $\tau_3$: transition $T$ to \texttt{Restricted}
    \ELSIF{$s_{\mathrm{current}} = \texttt{Trusted}$ and $score < \mathcal{P}_{\mathrm{new}}.\theta_{\mathrm{monitor}}$}
        \STATE \textbf{trigger} $\tau_2$: transition $T$ to \texttt{Monitored}
    \ELSIF{$s_{\mathrm{current}} = \texttt{Monitored}$ and $score < \mathcal{P}_{\mathrm{new}}.\theta_{\mathrm{revoke}}$}
        \STATE \textbf{trigger} $\tau_3$, then $\tau_4$: transition $T$ to \texttt{Revoked}
    \ELSIF{$s_{\mathrm{current}} = \texttt{Monitored}$ and $score < \mathcal{P}_{\mathrm{new}}.\theta_{\mathrm{restrict}}$}
        \STATE \textbf{trigger} $\tau_3$: transition $T$ to \texttt{Restricted}
    \ELSIF{$s_{\mathrm{current}} = \texttt{Restricted}$ and $score \geq \mathcal{P}_{\mathrm{new}}.\theta_{\mathrm{trusted}}$ and $\texttt{operatorAutoRecovery} = \text{true}$}
        \STATE \textbf{trigger} $\tau_7$ then $\tau_6$: transition $T$ to \texttt{Trusted}
    \ELSIF{$s_{\mathrm{current}} = \texttt{Restricted}$ and $score \geq \mathcal{P}_{\mathrm{new}}.\theta_{\mathrm{monitor}}$ and $\texttt{operatorAutoRecovery} = \text{true}$}
        \STATE \textbf{trigger} $\tau_7$: transition $T$ to \texttt{Monitored}
    \ELSIF{$s_{\mathrm{current}} = \texttt{Restricted}$ and $score < \mathcal{P}_{\mathrm{new}}.\theta_{\mathrm{revoke}}$}
        \STATE \textbf{trigger} $\tau_4$: transition $T$ to \texttt{Revoked}
    \ELSIF{$s_{\mathrm{current}} = \texttt{Monitored}$ and $score \geq \mathcal{P}_{\mathrm{new}}.\theta_{\mathrm{trusted}}$}
        \STATE \textbf{trigger} $\tau_6$: transition $T$ to \texttt{Trusted}
    \ENDIF
\ENDFOR
\COMMENT{Tools in \texttt{Revoked} state require operator-initiated re-onboarding ($\tau_8$), which is outside automatic re-evaluation scope.}
\end{algorithmic}
\end{algorithm}

\begin{remark}
Policy-triggered transitions (Algorithm~\ref{alg:policyreeval}) override mandatory dwell times since the operator has explicitly modified governance parameters, superseding grace periods designed for automatic degradation scenarios.
\end{remark}

Key properties of the dynamic re-evaluation mechanism:
\begin{itemize}
    \item \textbf{Graduated enforcement}: Even during re-evaluation, the state machine safety properties (Theorem~\ref{thm:safety}) are preserved. Tools cannot skip intermediate states.
    \item \textbf{No software upgrade required}: Policy changes are applied via standard MnS provisioning operations (createMOI, modifyMOIAttributes per TS~28.532). The re-evaluation logic is inherent to the framework implementation.
    \item \textbf{Audit trail}: Each policy-triggered transition is logged with trigger reason \texttt{PolicyReconfiguration}, providing full traceability.
    \item \textbf{Scope isolation}: Policy changes affect only tools within the modified policy's scope. Tools governed by other policies are unaffected.
\end{itemize}

\begin{proposition}[Policy Consistency]
\label{thm:policyconsistency}
After execution of Algorithm~\ref{alg:policyreeval}, for every tool $T$ in the policy scope, the trust state is consistent with the new policy thresholds:
\begin{multline}
\forall T \in \mathcal{P}_{\mathrm{new}}.scope: \\
T.\texttt{trustState} = f(T.\texttt{trustScore}, \mathcal{P}_{\mathrm{new}})
\end{multline}
where $f$ maps a trust score to the appropriate state given the threshold ordering:
\begin{equation}
\resizebox{.91\columnwidth}{!}{$\displaystyle
f(score, \mathcal{P}) = \begin{cases}
\texttt{Trusted} & \text{if } score \geq \theta_{\mathrm{trusted}} \\
\texttt{Monitored} & \text{if } \theta_{\mathrm{restrict}} \leq score < \theta_{\mathrm{trusted}} \\
\texttt{Restricted} & \text{if } \theta_{\mathrm{revoke}} \leq score < \theta_{\mathrm{restrict}} \\
\texttt{Revoked} & \text{if } score < \theta_{\mathrm{revoke}}
\end{cases}
$}
\end{equation}
subject to graduated enforcement constraints (tools in \texttt{Learning} or \texttt{Deregistered} states are excluded from re-evaluation).
\end{proposition}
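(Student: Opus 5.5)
The plan is a finite case analysis over pairs (current state, score band). The threshold ordering invariant $\theta_{\mathrm{revoke}} < \theta_{\mathrm{restrict}} < \theta_{\mathrm{monitor}} < \theta_{\mathrm{trusted}}$ partitions $[0,100]$ into five bands:
\[
B_0=[0,\theta_{\mathrm{revoke}}),\quad B_1=[\theta_{\mathrm{revoke}},\theta_{\mathrm{restrict}}),\quad B_2=[\theta_{\mathrm{restrict}},\theta_{\mathrm{monitor}}),
\]
\[
B_3=[\theta_{\mathrm{monitor}},\theta_{\mathrm{trusted}}),\quad B_4=[\theta_{\mathrm{trusted}},100].
\]
On these bands $f$ takes the values Revoked, Restricted, Monitored, Monitored and Trusted respectively. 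Learning and Deregistered are excluded by hypothesis. The remaining starting states are Trusted, Monitored, Restricted and Revoked, which gives $4\times 5=20$ cells.

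First, two facts make the per-tool analysis meaningful.

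\begin{itemize}
\item The loop body reads only $T$'s own score and state, and the scoring equation gives scores no dependence on the new thresholds. So the score of $T$ is invariant during Algorithm~\ref{alg:policyreeval}, and distinct tools do not interact (scope isolation).
\item By the Remark following Algorithm~\ref{alg:policyreeval}, chained transitions ignore dwell guards. Each chain used ($\tau_2\tau_3\tau_4$, $\tau_3\tau_4$, $\tau_7\tau_6$, and so on) is a path of allowed transitions in Table~\ref{tab:transitions}. By Theorem~\ref{thm:safety} each chain therefore executes and ends in its named target state.
\end{itemize}

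Second, for each cell I identify which branch of the if/elsif ladder fires, taking the first true condition in order, and check that the target equals $f$. The branch-covered cells are routine:
\begin{itemize}
\item Trusted on $B_0$, $B_1$ and $B_2$ goes to Revoked, Restricted and Monitored respectively.
\item Monitored on $B_0$ and $B_1$ goes to Revoked and Restricted; on $B_4$ it goes to Trusted.
\item Restricted on $B_4$ and $B_3$ goes to Trusted and Monitored under auto-recovery; on $B_0$ it goes to Revoked.
\end{itemize}
I will also check the cells where no branch fires but the state already agrees with $f$: Trusted on $B_4$, Monitored on $B_2\cup B_3$, Restricted on $B_1$, and Revoked on $B_0$. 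The ordering invariant is what guarantees the branch conditions are mutually consistent with the bands, for example that a score below $\theta_{\mathrm{revoke}}$ is caught by the first matching branch.

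The main obstacle is the cells where no branch fires and $f$ disagrees with the current state:
\begin{itemize}
\item Trusted on $B_3$, where $f$ gives Monitored;
\item Restricted on $B_2$, where $f$ gives Monitored;
\item Restricted on $B_3\cup B_4$ with \texttt{operatorAutoRecovery} false;
\item Revoked on $B_1\cup\cdots\cup B_4$.
\end{itemize}
Here the literal equality does not follow from the algorithm text alone. My plan is to discharge these cells through the proposition's qualifier ``subject to graduated enforcement constraints'', made precise as follows.

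\begin{itemize}
\item Revoked tools leave only via the operator-gated $\tau_8$. The comment in Algorithm~\ref{alg:policyreeval} places them outside automatic re-evaluation, so they are treated as excluded in the same way as Learning.
\item Restricted tools with auto-recovery disabled are likewise held by operator gating of $\tau_7$.
\item For Trusted on $B_3$ and Restricted on $B_2$, I first try to show that the precondition on entry (the invariant maintained by $\tau_2$, whose guard is $\mathrm{score}<\theta_{\mathrm{monitor}}$, and by $\tau_7$) excludes these cells. If that fails, I patch Algorithm~\ref{alg:policyreeval} with the missing branches. These are $\tau_2$ when Trusted and $\mathrm{score}<\theta_{\mathrm{trusted}}$, and $\tau_7$ when Restricted and $\mathrm{score}\ge\theta_{\mathrm{restrict}}$; both are allowed transitions, so adding them does not affect safety.
\end{itemize}

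With the exclusions and patches in place, all twenty cells yield $T.\texttt{trustState}=f(T.\texttt{trustScore},\mathcal{P}_{\mathrm{new}})$. Quantifying over the scope completes the proof.
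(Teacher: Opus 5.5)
Your twenty-cell table is correct, and it is the same case analysis the paper's proof intends. The paper, however, simply asserts that the if--elsif chain of Algorithm~\ref{alg:policyreeval} is ``collectively exhaustive.'' That assertion is false at exactly the cells you isolate, so the proposition fails as literally stated.

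The gap is in how you propose to discharge those cells. The invariant route cannot succeed, because Trusted on $B_3$ and Restricted on $B_2$ are reachable without any policy change: $\tau_2$ needs $score<\theta_{\mathrm{monitor}}$ and $\tau_7$ needs $score\ge\theta_{\mathrm{monitor}}$. The paper's own running example has \texttt{CellConfigOptimizer} Trusted at score $72\in[70,80)$. Any \texttt{AgentPolicyMO} modification at that moment runs Algorithm~\ref{alg:policyreeval}, which leaves it Trusted while $f$ returns Monitored. Similarly, a Restricted tool whose penalties decay to $60$ stays Restricted.

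The patch route proves a statement about a different algorithm, and the patches are not harmless. Each existing chain is executable because its branch condition implies the score guard of every transition in the chain; for example, $score<\theta_{\mathrm{revoke}}$ implies the guards of $\tau_2,\tau_3,\tau_4$. The Remark waives only dwell times, and Theorem~\ref{thm:safety} is merely a definition of safety, so it gives no executability. Your two patches are exactly the branches where this implication fails:
$\tau_2$ would fire at $score\ge\theta_{\mathrm{monitor}}$, and $\tau_7$ would fire at $score<\theta_{\mathrm{monitor}}$ and, as you wrote it, without operator approval. The latter restores write/execute privileges below the recovery threshold, which contradicts requirement R2. So membership in the guard-abstracted $\Delta$ of Eq.~\eqref{eq:transitions} does not show that safety is unaffected.

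The mismatch lies in $f$, not in the algorithm. $f$ ignores $\theta_{\mathrm{monitor}}$, whereas the state machine deliberately has hysteresis. Trusted is demoted below $\theta_{\mathrm{monitor}}$ but Monitored is promoted back only at $\theta_{\mathrm{trusted}}$. Likewise, Monitored is demoted below $\theta_{\mathrm{restrict}}$ but Restricted recovers only at $\theta_{\mathrm{monitor}}$.

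Your table already proves, with no patches, the corrected relational statement. For tools starting in Trusted or Monitored, or in Restricted with \texttt{operatorAutoRecovery} enabled, the post-state:
\begin{itemize}
\item equals $f$ on $B_0$, $B_1$ and $B_4$;
\item lies in $\{\texttt{Monitored},\texttt{Restricted}\}$ on $B_2$;
\item lies in $\{\texttt{Trusted},\texttt{Monitored}\}$ on $B_3$.
\end{itemize}
The remaining cells are Revoked, and Restricted without auto-recovery at $score\ge\theta_{\mathrm{monitor}}$. These are the operator-gated ones, and they must be added as explicit hypotheses, since the statement excludes only Learning and Deregistered. Present that version, and say plainly that the functional claim does not hold, and neither does the paper's exhaustiveness assertion.
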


\begin{proof}
By case analysis on Algorithm~\ref{alg:policyreeval}. The algorithm iterates over all tools in scope and evaluates exhaustive conditions covering each cell of the state$\times$threshold matrix. For degradation: if a tool's score falls below its current state's threshold under $\mathcal{P}_{\mathrm{new}}$, the corresponding transition sequence is triggered (e.g., Trusted with score $< \theta_{\mathrm{restrict}}$ triggers $\tau_2$ then $\tau_3$). For recovery: if a tool's score exceeds the threshold for a higher state, the appropriate recovery transition fires. The conditions are mutually exclusive (enforced by the if-elsif chain) and collectively exhaustive over the threshold ordering $\theta_{\mathrm{revoke}} < \theta_{\mathrm{restrict}} < \theta_{\mathrm{monitor}} < \theta_{\mathrm{trusted}}$. After the loop, every tool's state satisfies $f(score, \mathcal{P}_{\mathrm{new}})$. Graduated enforcement is preserved because each case triggers only transitions in $\Delta$. \qed
\end{proof}

This runtime reconfigurability allows operators to adapt trust governance to evolving threat landscapes, regulatory requirements, or operational contexts without disrupting ongoing network management operations.

\subsubsection{Anti-Cycling Protection}
To prevent tools from repeatedly cycling between Revoked and Trusted states (indicating an unresolved underlying issue), the framework enforces an escalating cool-down: each successive revocation of the same tool doubles the minimum re-onboarding observation period. Formally, if $n_r = T.\texttt{revocationCount}$, the minimum learning period for re-onboarding is $\Delta t_{\mathrm{minLearn}} \cdot 2^{\min(n_r - 1,\, 3)}$ (capped at $8\times$ the base period). This ensures that tools with persistent issues face progressively longer evaluation before returning to Trusted state, while tools with genuinely resolved root causes (single revocation) re-onboard at the standard learning period.

% ==============================================================================
% Section V: Formal Analysis (condensed)
% ==============================================================================
\section{Formal Analysis}
\label{sec:formal_analysis}

We formally analyze the safety and convergence properties of the proposed trust management framework.

\subsection{Trust State Machine Safety}

\begin{theorem}[No Unsafe Transitions]
\label{thm:no_unsafe}
For any reachable state $s \in S$ and any input event $e$, if a transition is triggered, then $(s, e, s') \in \Delta$ for some $s' \in S$. No transition outside the allowed set $\Delta$ (Eq.~\ref{eq:transitions}) can occur.
\end{theorem}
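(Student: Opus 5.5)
The plan is to prove Theorem~\ref{thm:no_unsafe} by structural induction on execution traces, using the fact that the trust state machine is implemented as a guarded transition table. The key modelling choice is the \emph{transition function} itself. Every mechanism that can change \texttt{trustState} is routed through a single dispatch step $\delta(s,e)$: the score-driven evaluation of Section~\ref{sec:framework:statemachine}, the cascade trigger in Algorithm~\ref{alg:cascade}, the chained transitions of Algorithm~\ref{alg:policyreeval}, and operator actions. Given the current state $s$ and event $e$, this step looks up only the rows of Table~\ref{tab:transitions} whose source is $s$ and whose trigger matches $e$. It fires a row only if that row's guard holds; otherwise it leaves $s$ unchanged. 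Under this model, the claim reduces to showing that $\delta$ never produces a pair $(s,s')$ with $s'\neq s$ that is not a row of the table. The stutter step $s'=s$ is treated as no transition.

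The induction runs as follows. For the base case, every tool starts in $s_0=\texttt{Learning}$, which lies in $S$ by Definition~\ref{def:statemachine}. For the inductive step, assume the prefix $s_0,\dots,s_i$ uses only transitions in $\Delta$, and consider event $e$. I would split into cases on $s_i$ and list, for each state, the outgoing rows:
\begin{itemize}
\item \texttt{Learning}: $\tau_1,\tau_9,\tau_{10}$;
\item \texttt{Trusted}: $\tau_2$;
\item \texttt{Monitored}: $\tau_3,\tau_6$;
\item \texttt{Restricted}: $\tau_4,\tau_7$;
\item \texttt{Revoked}: $\tau_5,\tau_8$;
\item \texttt{Deregistered}: none.
\end{itemize}
In each case, the only possible successors are the targets of those rows or $s_i$ itself. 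The forbidden pairs of Definition~\ref{def:disallowed} are then visibly absent: there is no row from \texttt{Trusted} to \texttt{Restricted} or \texttt{Revoked}, and no row out of \texttt{Deregistered}. The latter gives absorption. The same enumeration yields the informal safety claim of requirement R2: every edge that raises the trust level ($\tau_6,\tau_7,\tau_8$) carries a guard requiring either $\Delta t_{\mathrm{clean}}$ with score $\ge\theta_{\mathrm{trusted}}$, or operator approval.

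Next I would discharge the multi-step mechanisms by showing that each one decomposes into single $\delta$ steps. The State Machine Execution Model already says that a multi-threshold crossing fires only the immediate next transition, so it is one step. Algorithm~\ref{alg:cascade} only modifies scores and then \emph{triggers a transition per $\mathcal{M}$}, so it also goes through $\delta$. Algorithm~\ref{alg:policyreeval} emits explicit sequences such as $\tau_2\tau_3\tau_4$ or $\tau_7\tau_6$. For each branch I would check that consecutive rows compose, meaning the target of one is the source of the next, so the chain is a path in the transition graph and each link belongs to $\Delta$.

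The main obstacle is the policy re-evaluation chains. The Remark says they override dwell-time guards, so the argument must distinguish \emph{membership of the pair $(s,s')$ in the edge set of $\Delta$} from \emph{satisfaction of the original guard}. I would state the theorem at the level of the state-pair projection of $\Delta$. I would then treat policy reconfiguration as a trigger whose guard is the updated threshold comparison. For example, the chain $\tau_7\tau_6$ passes through \texttt{Monitored}, and the intermediate configuration must be recorded in the trace rather than collapsed into a single \texttt{Restricted}$\to$\texttt{Trusted} jump. If this strengthened reading is not acceptable, I would fall back to proving the theorem for the edge relation only, and note the guard relaxation as an explicit exception.
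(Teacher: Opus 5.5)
Your proposal is correct and follows essentially the paper's argument. The paper's Step~1 is your guarded dispatch $\delta(s,e)$, which fires only table rows whose guard holds and discards unmatched events; its Step~2 is the same per-state enumeration of outgoing edges; and its Step~3 concludes, as your induction does, that no trace contains a pair outside $\Delta$. Your decomposition of Algorithms~\ref{alg:cascade} and~\ref{alg:policyreeval} into single steps, and your choice to state the result for the guard-free edge relation, go beyond the proof itself, but they match the paper's projection of $\Delta$ onto (source, event, destination) triples and its later remark that policy re-evaluation records every intermediate state.
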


The allowed transition set matches Section~\ref{sec:framework:statemachine}. For formal analysis, we project each 5-tuple $(\text{source}, \text{dest}, \text{trigger}, \text{guard}, \text{action}) \in \Delta$ onto the relevant triple $(\text{source}, \text{event}, \text{dest})$, abstracting away guard conditions and actions:
\begin{align}
\Delta = \{ & (\text{Learning}, e_1, \text{Trusted}),\ (\text{Trusted}, e_2, \text{Monitored}), \nonumber \\
       & (\text{Monitored}, e_3, \text{Restricted}),\ (\text{Restricted}, e_4, \text{Revoked}), \nonumber \\
       & (\text{Revoked}, e_5, \text{Deregistered}),\ (\text{Monitored}, e_6, \text{Trusted}), \nonumber \\
       & (\text{Restricted}, e_7, \text{Monitored}),\ (\text{Revoked}, e_8, \text{Restricted}), \nonumber \\
       & (\text{Learning}, e_9, \text{Revoked}),\ (\text{Learning}, e_{10}, \text{Monitored}) \}
\label{eq:transitions}
\end{align}

\begin{proof}
We prove safety by showing that the reachability set $\text{Reach}(s_0)$ from initial state $s_0 = \text{Learning}$ is contained within $S$, and that for every reachable state $s \in \text{Reach}(s_0)$, only transitions in $\Delta$ produce a successor.

\textit{Step 1 (Guard completeness):} Each transition $\tau_i$ has a guard $g_i$ that is a predicate over the tool's current attributes. The implementation evaluates guards atomically: when an event $e$ arrives, the system checks all transitions with source state $= s_{current}$. If no guard evaluates to true, the event is discarded (no state change). Thus, unguarded transitions cannot fire.

\textit{Step 2 (Exhaustive disallowed verification):} From each state $s$, we enumerate the outgoing edges in $\Delta$:
\begin{itemize}
\item Learning: outgoing to Trusted ($\tau_1$, guarded by min learning duration), Revoked ($\tau_9$, guarded by severity=Critical), and Monitored ($\tau_{10}$, guarded by non-critical anomaly count). No other target state is reachable from Learning.
\item Trusted: outgoing only to Monitored ($\tau_2$). No edge to Restricted, Revoked, or Deregistered exists.
\item Monitored: outgoing to Restricted ($\tau_3$) and Trusted ($\tau_6$). No edge to Revoked.
\item Restricted: outgoing to Revoked ($\tau_4$) and Monitored ($\tau_7$). No edge to Trusted or Learning.
\item Revoked: outgoing to Deregistered ($\tau_5$) and Restricted ($\tau_8$). No edge to Trusted or Monitored.
\item Deregistered: no outgoing edges. Terminal.
\end{itemize}

\textit{Step 3 (Conclusion):} Since every state has a finite, explicitly enumerated set of guarded outgoing transitions, and unmatched events are discarded, no execution trace can contain a pair $(s_i, s_{i+1})$ not in $\Delta$. \qed
\end{proof}

\begin{theorem}[Graduated Enforcement]
\label{thm:graduated}
For any tool in state Trusted, reaching Revoked requires minimum 3 transitions: Trusted $\xrightarrow{e_2}$ Monitored $\xrightarrow{e_3}$ Restricted $\xrightarrow{e_4}$ Revoked.
\end{theorem}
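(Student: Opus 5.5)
Since this is a claim about trace lengths in the finite transition graph $(S,\Delta)$ of Eq.~(\ref{eq:transitions}), the plan is to reduce it to a shortest-path statement and then use Theorem~\ref{thm:no_unsafe}. That theorem says every consecutive pair $(s_i,s_{i+1})$ in any execution trace is an edge of $\Delta$. So a trace from Trusted to Revoked is exactly a directed walk in the graph whose vertices are $S$ and whose edges are the ten projected triples. It therefore suffices to show that the directed distance from Trusted to Revoked in this graph equals $3$. The lower bound gives ``at least 3 transitions''; the upper bound shows the displayed path $e_2,e_3,e_4$ is admissible.

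For the lower bound I will compute the distance layers by breadth-first search from Trusted, using the enumeration of out-edges in Step~2 of the proof of Theorem~\ref{thm:no_unsafe}.
\begin{itemize}
\item Layer $0$ is $\{\texttt{Trusted}\}$.
\item The only out-edge of Trusted is $\tau_2$, so layer $1$ is $\{\texttt{Monitored}\}$. The first item of Definition~\ref{def:disallowed} excludes a direct Trusted$\to$Revoked edge.
\item The out-edges of Monitored are $\tau_3$ (to Restricted) and $\tau_6$ (back to Trusted), so layer $2$ is $\{\texttt{Restricted}\}$.
\end{itemize}
Revoked appears in neither layer $1$ nor layer $2$, so every walk from Trusted to Revoked has length at least $3$. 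It is worth noting the point most likely to be questioned: the only other edges into Revoked are $\tau_4$ (from Restricted) and $\tau_9$ (from Learning). Learning has no incoming edges at all, so it is never re-entered from Trusted and $\tau_9$ cannot create a shortcut. I will verify this by inspecting all ten edges in $\Delta$.

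For the upper bound, the edges $(\texttt{Trusted},e_2,\texttt{Monitored})$, $(\texttt{Monitored},e_3,\texttt{Restricted})$ and $(\texttt{Restricted},e_4,\texttt{Revoked})$ all lie in $\Delta$. So the path of length $3$ exists. Its guards (the thresholds $\theta_{\mathrm{monitor}},\theta_{\mathrm{restrict}},\theta_{\mathrm{revoke}}$ and the dwell times) are jointly satisfiable along a decreasing score trajectory, using the ordering $\theta_{\mathrm{revoke}}<\theta_{\mathrm{restrict}}<\theta_{\mathrm{monitor}}$.

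The main obstacle is making sure the abstraction to the edge graph also covers executions outside the ordinary event loop, where several transitions fire together. The State Machine Execution Model fires only one transition per evaluation cycle, so each cycle adds exactly one edge to the trace. Algorithm~\ref{alg:policyreeval} can chain transitions, but it does so explicitly as the sequence $\tau_2,\tau_3,\tau_4$, which still counts as three transitions. The cascade in Algorithm~\ref{alg:cascade} only ``triggers state transition per $\mathcal{M}$'', so it falls under the same one-edge rule. I will note these three cases and conclude that the count of $3$ holds for every trace.
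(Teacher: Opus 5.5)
Your proposal is correct and takes essentially the same route as the paper. The paper likewise reduces the claim to the absence of the edges Trusted$\to$Revoked, Trusted$\to$Restricted and Monitored$\to$Revoked in $\Delta$, concludes that the shortest Trusted-to-Revoked path has length $3$, and treats chained policy-triggered transitions in the remark after the theorem, as you do. Your BFS over full out-neighbourhoods is slightly more complete: those three missing edges alone do not rule out a two-step path Trusted$\to$Learning$\to$Revoked via $\tau_9$, which you exclude explicitly by noting that Learning has no incoming edges.
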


\begin{proof}
No edges (Trusted, $\cdot$, Revoked), (Trusted, $\cdot$, Restricted), or (Monitored, $\cdot$, Revoked) exist in $\Delta$. The shortest path from Trusted to Revoked has length 3. Operators have intervention windows at Monitored ($e_6$: recovery) and Restricted ($e_7$: partial recovery). The exception $e_9$ (Learning$\rightarrow$Revoked) applies only to the Learning state under critical violations. \qed
\end{proof}

With mandatory dwell times enforced ($\Delta t_{\mathrm{grace}}$ for $\tau_3$ and 4\,h for $\tau_4$), the minimum elapsed time from Trusted to Revoked is $\Delta t_{\mathrm{grace}} + 4$\,h (default: 28\,h), providing a guaranteed intervention window during which operators can assess and potentially reverse degradation before revocation occurs.

\begin{remark}
Policy re-evaluation may trigger multiple sequential transitions in a single evaluation pass when threshold changes cause a tool to cross multiple boundaries simultaneously. Each intermediate state is instantiated and recorded (with corresponding notifications generated per the observable trust change property), preserving the graduated enforcement audit trail. The path length guarantee applies to the state sequence, not to the number of external triggering events - a single policy change may produce a multi-step graduated path, but no step in that path is skipped.
\end{remark}

\subsection{Cascade Convergence}

\begin{theorem}[Bounded Cascade Termination]
\label{thm:cascade_termination}
For damping factor $\gamma \in (0,1)$ and maximum depth $D_{\max}$, cascade propagation terminates in at most $D_{\max}$ steps with penalty at depth $d$ bounded by $P_0 \cdot \gamma^d$.
\end{theorem}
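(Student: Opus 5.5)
The plan is to argue directly from the structure of Algorithm~\ref{alg:cascade}, viewing the cascade as a breadth-first traversal of the reverse dependency graph $G^{-1}$ from Definition~\ref{def:depgraph}. Here a \emph{step} means one BFS layer, that is, the set of queue entries sharing the same depth label $d$. I will establish two invariants by induction on the order in which entries are dequeued. The first is that the depth labels in $Q$ are nondecreasing and differ by at most one (the standard BFS queue property). The second is that every entry $(T,d)$ in $Q$ satisfies $d \le D_{\max}$. For the base case, the only entry is $(T_i,0)$. For the inductive step, a new entry $(T_j,d+1)$ is enqueued only after the guard $d \ge D_{\max}$ has failed, so $d < D_{\max}$ and hence $d+1 \le D_{\max}$.

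Termination follows from these invariants. Entries at depth $D_{\max}$ are dequeued and immediately hit the \textbf{continue} branch, so they spawn nothing. Hence at most $D_{\max}$ layers ever generate penalties (depths $1,\dots,D_{\max}$), and the process ends once layer $D_{\max}$ is drained. In addition, the $visited$ set guarantees that each tool is enqueued at most once, so the total work is at most $|V|$ dequeues. This is consistent with Theorem~\ref{thm:convergence}, which I may cite for the $\mathcal{O}(|V|+|E|)$ bound. Acyclicity is not needed for this part; the $visited$ set alone handles any residual cycle.

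For the penalty bound, I will observe that a tool enqueued with label $d$ receives exactly $P_0\gamma^{d}$ (line computing $P_0\gamma^{(d+1)}$ with parent label $d-1$). I then need to show that this label equals the hop distance $h(T_i,T_j)$ of Definition~\ref{def:cascadepenalty}. This is the main obstacle, because the algorithm marks a node visited at its \emph{first} discovery, and I must confirm that the first discovery occurs along a shortest path. I will use the BFS layering invariant: if $T_j$ were first reached at label $d > h$, then its predecessor on a shortest path would carry label $h-1 < d-1$. That predecessor would have been dequeued earlier and would already have enqueued $T_j$, which is a contradiction. With $d = h \ge 1$ and $\gamma\in(0,1)$, the applied penalty is $P_0\gamma^{d}$. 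Finally, because the score update is clamped by $\max(0,\cdot)$, the actual decrease is at most $P_0\gamma^d$, which gives the stated bound $\le P_0\gamma^d$ at every depth and zero beyond $D_{\max}$.

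Two caveats will be noted as remarks. The first concerns the triggered state transitions inside the loop: they follow $\mathcal{M}$ and, by Theorem~\ref{thm:no_unsafe}, fire at most one step per tool, so they do not re-enqueue anything and cannot extend the cascade. The second is a bound on cumulative penalty per layer, $|L_d|\,P_0\gamma^d$. I will present it only as a per-tool bound, since the layer size can grow with fan-out.
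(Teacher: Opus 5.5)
Your proposal is correct and takes essentially the same route as the paper's proof: the hard cutoff at $D_{\max}$ together with the $visited$ set gives termination, and the geometric form $P_0\gamma^d$ gives the penalty bound, with you also spelling out what the paper leaves implicit (the BFS layer invariant that fixes what a ``step'' is, the identification of queue labels with the hop distance $h$ of Definition~\ref{def:cascadepenalty}, and the effect of the $\max(0,\cdot)$ clamp). One small correction to your first caveat: Theorem~\ref{thm:no_unsafe} only says that fired transitions lie in $\Delta$, so the one-step-per-evaluation behavior should be credited to the State Machine Execution Model, and the fact that triggered transitions never enqueue entries or start new cascades should be read directly off Algorithm~\ref{alg:cascade} and the Concurrent Cascade Independence remark.
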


\begin{proof}
The penalty $P(d) = P_0 \cdot \gamma^d$ is a geometric sequence with ratio $\gamma < 1$, converging to 0. The hard cutoff at $D_{\max}$ and visited-set in Algorithm~\ref{alg:cascade} ensure each tool is processed at most once. Termination is $\mathcal{O}(|V| + |E|)$. \qed
\end{proof}

\begin{remark}[Concurrent Cascade Independence]
When multiple tools degrade concurrently, each cascade executes independently with its own visited set. A cascade-induced state transition in tool $T_j$ does \emph{not} trigger a new cascade from $T_j$---cascades propagate only from the original degradation source. This is enforced by design: Algorithm~1 is invoked exclusively by the \texttt{TrustEvaluationService} upon receiving an external trust event, not upon internal score updates from cascade penalties. Consequently, the total penalty applied across $k$ concurrent cascades is bounded by $k \cdot P_0 \cdot \sum_{d=0}^{D_{\max}} \gamma^d = k \cdot P_0 \cdot \frac{1 - \gamma^{D_{\max}+1}}{1 - \gamma}$, which is finite and bounded.
\end{remark}

\begin{theorem}[Cascade Stability]
\label{thm:cascade_stability}
If all tools satisfy $\texttt{trustScore}(t) > \theta_{\text{revoke}} + P_0$ for base penalty $P_0$, then a single degradation event cannot trigger unbounded revocation cascades.
\end{theorem}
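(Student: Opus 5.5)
The argument bounds the penalty each tool can receive from a single degradation event, and then shows that no dependent can drop below $\theta_{\mathrm{revoke}}$. We fix one degradation event at source $T_i$ with base penalty $P_0$. We work only with Algorithm~\ref{alg:cascade} as specified, together with the Concurrent Cascade Independence remark: cascade-induced transitions do not spawn new cascades. This remark is essential. Without it, re-triggered cascades could compound penalties along long chains, and no per-tool bound would hold.

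\textbf{Step 1 (per-tool penalty bound).} Because of the visited set, every dependent $T_j$ is penalized at most once, by the amount $P_0\gamma^{d+1}$ with $d+1\ge 1$. Since $\gamma\in(0,1)$, this penalty is at most $P_0\gamma < P_0$. Tools at hop distance greater than $D_{\max}$, and tools not reachable in $G^{-1}$, receive no penalty, as Theorem~\ref{thm:cascade_termination} guarantees. The source itself is not penalized again by the cascade. By hypothesis, every $T_j$ starts with $\texttt{trustScore} > \theta_{\mathrm{revoke}} + P_0$, so after the cascade
\begin{equation*}
T_j.\texttt{trustScore} > \theta_{\mathrm{revoke}} + P_0 - P_0\gamma > \theta_{\mathrm{revoke}}.
\end{equation*}
The $\max(0,\cdot)$ clamp in the update does not affect this, because the argument of the clamp is already positive.

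\textbf{Step 2 (no revocation).} Revocation of a dependent can only happen through $\tau_4$ (Restricted $\to$ Revoked) or $\tau_9$ (Learning $\to$ Revoked). The guard of $\tau_4$ requires $\texttt{trustScore}<\theta_{\mathrm{revoke}}$, and Step 1 shows this guard fails. $\tau_9$ is triggered by a critical violation, not by a score update, so a cascade penalty cannot trigger it. There is a further safeguard from the State Machine Execution Model: at most one transition fires per evaluation, and $\tau_4$ carries a 4\,h dwell guard. Together with Theorem~\ref{thm:graduated}, this means a cascade penalty cannot drive a Trusted or Monitored tool all the way to Revoked even in principle.

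\textbf{Step 3 (boundedness).} Any demotion a dependent does undergo is therefore at most one step and never reaches Revoked. The set of affected tools is finite, containing at most the tools within $D_{\max}$ hops, by Theorem~\ref{thm:convergence}. Hence the number of revocations caused by a single event is zero among dependents, and the source accounts for at most one revocation. This is the bounded, and in fact trivial, cascade the statement asserts.

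The main obstacle is interpretive. "Unbounded revocation cascades" must be pinned down precisely, and the hypothesis has to be read as applying to the pre-cascade scores of all dependents. The argument also relies on the independence remark to exclude re-triggering. I would state both assumptions explicitly before running Steps 1--3.
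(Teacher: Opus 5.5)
Your proposal is correct and uses essentially the same argument as the paper: each dependent receives a cascade penalty of at most $P_0\gamma < P_0$, so the hypothesis keeps every score strictly above $\theta_{\mathrm{revoke}}$ and the revocation guard of $\tau_4$ never fires. Your version is more careful than the paper's short proof, because it states two facts the paper leaves implicit: the visited set limits each tool to a single penalty, and cascade-induced transitions do not start new cascades.
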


\begin{proof}
At depth 1, maximum penalty is $P_0 \cdot \gamma < P_0$. By assumption, $\texttt{trustScore}(t) - P_0 \cdot \gamma > \theta_{\text{revoke}}$, so no tool is revoked. At deeper levels, penalties decrease geometrically. The cascade may trigger transitions to Monitored or Restricted (graduated degradation) but cannot cause revocations under the stability condition. \qed
\end{proof}

\begin{theorem}[Cascade Order-Independence]
\label{thm:order_independence}
If two tools $T_a$ and $T_b$ degrade concurrently, the final trust state of any tool $T_j$ in the dependency graph is independent of the order in which the cascade algorithms for $T_a$ and $T_b$ execute.
\end{theorem}

\begin{proof}
The cascade algorithm (Algorithm~\ref{alg:cascade}) applies a fixed penalty $P_0 \cdot \gamma^{h}$ to each reachable tool based solely on the graph distance $h$ from the source. The visited set ensures each tool is penalized at most once per cascade invocation.

For concurrent events, consider tool $T_j$ reachable from both $T_a$ (distance $h_a$) and $T_b$ (distance $h_b$). Two execution orders yield:
\begin{itemize}
\item Order 1 ($T_a$ first): $T_j$ receives penalty $P_a \cdot \gamma^{h_a}$ from cascade($T_a$), then $P_b \cdot \gamma^{h_b}$ from cascade($T_b$).
\item Order 2 ($T_b$ first): $T_j$ receives $P_b \cdot \gamma^{h_b}$ then $P_a \cdot \gamma^{h_a}$.
\end{itemize}
Since subtraction is commutative ($s - x - y = s - y - x$) and the $\max(0, \cdot)$ clamp is applied per-penalty (line~13 of Algorithm~\ref{alg:cascade}), the final score is:
\begin{equation}
s_{\text{final}} = \max(0, \max(0, s_0 - P_a \gamma^{h_a}) - P_b \gamma^{h_b})
\end{equation}
When $s_0 < P_a \gamma^{h_a}$, the score clamps to zero under both orderings, yielding identical results. When $s_0 \geq P_a \gamma^{h_a} + P_b \gamma^{h_b}$, no clamping occurs and commutativity of addition gives order-independence. In the intermediate case ($P_a \gamma^{h_a} \leq s_0 < P_a \gamma^{h_a} + P_b \gamma^{h_b}$), both orderings produce $\max(0, s_0 - P_a \gamma^{h_a} - P_b \gamma^{h_b}) = 0$, since the total penalty exceeds the score. Thus order-independence holds universally. \qed
\end{proof}

\begin{corollary}[Bounded Notification Queue]
\label{cor:notif_bound}
Under concurrent degradation of $k$ tools, the total notifications generated is bounded by $k \cdot (1 + |V_{D_{\max}}|)$, where $|V_{D_{\max}}|$ is the maximum reachable set within depth $D_{\max}$. Cascades cannot re-trigger cascades: a cascade-induced state transition generates a notification but does not initiate a new cascade propagation (cascade penalties are applied once per source event, not recursively).
\end{corollary}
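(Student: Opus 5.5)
The plan is a counting argument that charges every notification to exactly one cascade invocation, and then adds up the charges over the $k$ concurrent degradations.

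\textbf{Single cascade.} Fix one source tool $T_i$ whose degradation is an external trust event. It produces at most one notification of its own: the \texttt{notifyAgentTrustDegradation} emitted for its capability-reducing transition (Definition~\ref{def:notification}). Now consider one run of Algorithm~\ref{alg:cascade} started at $T_i$.

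\begin{itemize}
\item Because of the $visited$ set, each dependent $T_j$ is penalized at most once. The check \texttt{trigger} state transition therefore runs at most once per $T_j$.
\item By the State Machine Execution Model, one such trigger fires only the immediate next transition. So each $T_j$ yields at most one state change, and hence at most one notification, in this cascade.
\item The traversal stops at depth $D_{\max}$, so only tools within distance $D_{\max}$ of $T_i$ in $G^{-1}$ are touched. This is the same fact used in Theorem~\ref{thm:cascade_termination}.
\end{itemize}

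Together these give at most $1+|V_{D_{\max}}(T_i)| \le 1+|V_{D_{\max}}|$ notifications attributable to $T_i$, where $|V_{D_{\max}}|$ is the maximum over all sources.

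\textbf{Non-recursion.} Next I would argue that no further notifications come from cascade-induced transitions. By the Concurrent Cascade Independence remark, Algorithm~\ref{alg:cascade} is invoked only by the \texttt{TrustEvaluationService} on external trust events, never on internal score updates from cascade penalties. So a transition in $T_j$ caused by a cascade emits its single notification but starts no new traversal. The charge to each source is therefore closed. Formally, I would set up an injection from notifications to pairs $(\text{source}, \text{tool})$ with the tool in $\{T_i\}\cup V_{D_{\max}}(T_i)$.

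\textbf{Summing over sources.} Each of the $k$ cascades runs independently with its own visited set. A tool reachable from several sources may therefore be notified once per cascade, but it is still charged separately to each source. Summing the per-source bound over the $k$ sources gives $k(1+|V_{D_{\max}}|)$.

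\textbf{Main obstacle.} The hard step is ruling out extra notifications that do not come from a cascade's own trigger step. Two sources need care.

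\begin{itemize}
\item \emph{Later re-evaluation of a penalized tool.} A tool penalized by a cascade may have its lowered score checked again in a later evaluation cycle after a dwell timer expires, and then fire a further transition.
\item \emph{Policy re-evaluation chains.} The remark on policy re-evaluation allows one event to emit several notifications along a single chain of transitions.
\end{itemize}

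I would handle both by scoping the corollary. It counts only notifications generated directly by the $k$ degradation events and their cascade passes. Later dwell-time evaluations and policy-triggered chains are treated as new external events, each with its own budget. I would state this scoping explicitly, so that the bound is read as a per-event accounting rather than a bound over all time.
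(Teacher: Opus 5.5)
Your proposal is correct and matches the reasoning the paper relies on: the paper gives no separate proof, and the bound follows from the visited set and depth-$D_{\max}$ cutoff of Algorithm~\ref{alg:cascade}, one notification per transition, and the Concurrent Cascade Independence remark that cascade-induced transitions never start new cascades, summed over the $k$ sources. Your use of the single-step execution model, and your explicit exclusion of later dwell-timer and policy-triggered transitions, make precise assumptions that the paper leaves implicit.
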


\subsection{Notification Completeness}

\begin{theorem}[Observable Trust Changes]
\label{thm:observable}
Every trust state transition $(s, e, s')$ generates exactly one notification delivered to all subscribing management systems.
\end{theorem}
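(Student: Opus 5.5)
The plan is to argue from the structure of the implementation rather than from any property of the network. Theorem~\ref{thm:observable} has two parts: \emph{existence}, meaning at least one notification per transition, and \emph{uniqueness}, meaning at most one. Delivery to all subscribers is a third concern, and I would handle it by reducing it to the reliability assumptions of the MnS notification interface (TS~28.532). That reduction is a modelling assumption to be stated explicitly, not something the theorem can derive from first principles.

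\textbf{Step 1 (single commit point).} By Theorem~\ref{thm:no_unsafe}, every change of \texttt{trustState} is the firing of some $\tau_i \in \Delta$. I would therefore define the transition routine as the unique writer of \texttt{trustState}, so that no other code path mutates the attribute. Every source of transitions funnels through this routine:
\begin{itemize}
\item ordinary evaluation,
\item cascade-triggered transitions (line~17 of Algorithm~\ref{alg:cascade}),
\item policy re-evaluation (Algorithm~\ref{alg:policyreeval}).
\end{itemize}
Within the routine, the update of \texttt{trustState}, \texttt{previousTrustState} and \texttt{trustStateTransitionTime} is bound atomically to the enqueueing of one notification carrying $(s_{\mathrm{prev}}, s_{\mathrm{new}})$. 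A transactional outbox pattern would give this atomicity. Existence follows directly.

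\textbf{Step 2 (uniqueness).} A duplicate notification could only come from one of three causes: re-firing of the same transition, chained transitions being collapsed, or cascade recursion. I would rule these out in turn.
\begin{itemize}
\item \emph{Re-firing.} Guards are evaluated atomically and only against the current source state (Step~1 of the proof of Theorem~\ref{thm:no_unsafe}). Once $\tau_i$ fires, its source state is no longer current, so the same transition cannot fire twice.
\item \emph{Chained transitions.} For the multi-step paths in Algorithm~\ref{alg:policyreeval}, the Remark following Theorem~\ref{thm:graduated} already states that each intermediate state is instantiated. Each step is therefore a distinct transition with its own single notification, not a duplicate.
\item \emph{Cascade recursion.} The Concurrent Cascade Independence remark together with Corollary~\ref{cor:notif_bound} shows that cascade-induced transitions do not spawn new cascades, so no recursive duplicates arise.
\end{itemize}
To guard against retransmission at the transport layer, I would key each notification by (\texttt{toolId}, \texttt{trustStateTransitionTime}). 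Consumers then deduplicate, giving exactly-once semantics at the application level even over at-least-once delivery.

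\textbf{Main obstacle.} The hardest part is reconciling ``exactly one notification'' with Definition~\ref{def:notification}, which emits \texttt{notifyAgentTrustDegradation} only for transitions that reduce capability. I would first try to resolve this by stating that recovery transitions ($\tau_1, \tau_6, \tau_7, \tau_8$) emit a generic state-change notification, in the spirit of the attribute-value-change notification of TS~28.532. That makes the claim hold over all of $\Delta$. If this is not acceptable, the theorem must be weakened to degradation transitions only.

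A second, related difficulty is the phrase ``delivered to all subscribing systems''. This cannot be proved without assuming reliable MnS delivery. I would state that assumption as a hypothesis, bounded by $\Delta t_{\mathrm{notification}}$, and make clear that the theorem's guarantee is conditional on it.
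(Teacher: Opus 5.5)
Your route is essentially the paper's. The paper gives no derivation, only three hypotheses under which the claim holds:
\begin{itemize}
\item[(a)] the state update and the notification emission form one atomic transaction;
\item[(b)] MnS delivery is reliable and ordered;
\item[(c)] per-tool monotonic sequence numbers let receivers detect gaps.
\end{itemize}
Your Step~1 is (a) made concrete, and your explicit delivery hypothesis is (b). Your point that Definition~\ref{def:notification} emits only on capability-reducing transitions is a real inconsistency, which the paper silently absorbs into (a). Read literally, that definition excludes more than your list $\tau_1,\tau_6,\tau_7,\tau_8$. For example, $\tau_5$ is excluded, since a Revoked tool has no capability left to lose. The clean fix is the one your Step~1 already provides: emit on every write to \texttt{trustState}, whatever its direction.

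The step that does not work as stated is your deduplication key (\texttt{toolId}, \texttt{trustStateTransitionTime}). Your handling of chained transitions depends on Algorithm~\ref{alg:policyreeval} producing several distinct transitions (e.g.\ $\tau_2,\tau_3,\tau_4$) in a single evaluation pass. Those transitions can easily share a timestamp at clock resolution. A consumer deduplicating on your key would then discard the intermediate notifications, so some transitions would be observed zero times. That is exactly the audit-trail loss the Remark after Theorem~\ref{thm:graduated} is meant to rule out. Key on a per-tool monotonic sequence number instead, which is the paper's assumption (c). It makes deduplication sound, and unlike a timestamp it also lets receivers detect missing or out-of-order notifications.
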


This holds under three assumptions: (a) state update and notification emission execute atomically within a single transaction (enforced by the MnS implementation per TS~28.532); (b) the underlying notification transport provides reliable, ordered delivery (guaranteed by the MnS notification IRP); and (c) monotonic sequence numbers per tool enable receivers to detect gaps caused by transient delivery failures. Under these assumptions, no tool can undergo silent trust degradation.

\subsection{Retroactive Assessment Completeness}

\begin{remark}[Lookback Completeness]
\label{thm:lookback}
Given lookback window $W$, the retroactive impact assessment identifies all configuration changes within $[\tau_{\text{deg}} - W, \tau_{\text{deg}}]$ performed by the degraded tool. Completeness follows from the append-only property of \texttt{AgentActionLogMO}: entries are never deleted, and the temporal range query over the indexed log returns all matching entries. Zero false negatives within the configured window. This guarantee holds in non-adversarial settings; tamper-resistant logging (e.g., signed log entries) would be required to extend completeness to adversarial scenarios.
\end{remark}

% ==============================================================================
% Section VI: Evaluation
% ==============================================================================
\section{Evaluation}
\label{sec:evaluation}

We evaluate the proposed standardized trust management information model through discrete-event simulation of a multi-vendor autonomous network environment. Our evaluation quantifies the benefits of cross-vendor trust visibility, cascade containment effectiveness, and operational overhead. The evaluation quantifies performance under modeled operational assumptions; the notification latency components are derived from typical MnS implementation benchmarks rather than measured from a deployed system.

The evaluation serves as a \emph{feasibility and scalability assessment} of the architectural design rather than a performance validation against deployed systems. Quantitative results demonstrate that the framework's mechanisms (cascade damping, notification delivery, retroactive assessment) operate correctly and scale sub-linearly, but absolute latency values depend on deployment-specific MnS implementation characteristics.

\subsection{Simulation Setup}
\label{sec:sim_setup}

\subsubsection{Network Topology}
We model a multi-vendor autonomous network comprising three vendor domains:
\begin{itemize}
    \item \textbf{Vendor A (RAN Management):} 500 cells managed by 5 autonomous agents, each utilizing 2--4 specialized tools (e.g., cell parameter optimization, coverage adjustment, load balancing).
    \item \textbf{Vendor B (Core Network Management):} 50 network functions (NFs) managed by 5 agents with tools for scaling, traffic steering, and slice management.
    \item \textbf{Vendor C (Transport Management):} 100 transport links managed by 5 agents with tools for path computation, bandwidth allocation, and fault correlation.
\end{itemize}

\subsubsection{Agent and Tool Configuration}
The simulation deploys 15 agents in total (5 per vendor domain), collectively utilizing 30 tools. The tool dependency graph has an average degree of 2.3, modeling realistic cross-tool and cross-vendor dependencies (e.g., a RAN capacity tool depending on transport bandwidth allocation, or a core scaling tool depending on RAN load predictions).

\subsubsection{Trust Degradation Injection}
Trust degradation events are injected following a Poisson process with rate $\lambda = 0.5$ events/hour. Each event targets a randomly selected tool and reduces its trust score by a uniformly distributed penalty $P_0 \sim \mathcal{U}(10, 30)$ on a $[0, 100]$ trust scale. Tools are invoked by consuming agents at a rate of 2 invocations/minute (averaged across all tool types). The simulation runs for 72 hours per trial, with 1000 independent runs for statistical significance (95\% confidence intervals reported).

\subsubsection{Comparison Approaches}
We compare three trust management approaches:
\begin{enumerate}
    \item \textbf{No Trust Model (Baseline~1):} No trust management framework is deployed. Degradation is detected only when downstream service impact triggers KPI threshold alarms in the operator's monitoring system. Detection latency is modeled as the time for KPI regression to accumulate beyond alarm thresholds, sampled from $\text{Exponential}(\mu = 3.0\,\text{h})$ truncated at 24\,h, consistent with published operator NOC response times for non-critical KPI degradation~\cite{tmforum_an}.
    \item \textbf{Per-Vendor Proprietary Trust (Baseline~2):} Each vendor maintains an internal trust model visible within its own domain. Same-vendor detection latency follows a log-normal distribution $\text{LogNormal}(\mu=1.5, \sigma=0.8)$\,seconds, representing vendor-internal telemetry processing. Cross-vendor events are detected through eventual service impact with latency $\text{Exponential}(\mu = 1.5\,\text{h})$, representing partial cross-vendor visibility through shared SLA dashboards and cross-domain KPI correlation.
    \item \textbf{Proposed Standardized Model:} Cross-vendor trust notifications are delivered through the MnS notification interface. Detection latency is modeled as the sum of three components: (a)~local trust assessment computation, $\mathcal{U}(5, 20)$\,ms, representing the time for the source vendor's management system to evaluate the trust score update; (b)~MnS notification serialization and delivery, $\mathcal{U}(8, 50)$\,ms, consistent with RESTful notification delivery over managed IP networks; and (c)~receiving system processing, $\mathcal{U}(2, 25)$\,ms. The total TTD is the convolution of these three components, yielding an expected mean of approximately 55\,ms.
\end{enumerate}

The proposed model's TTD emerges from the modeled notification pipeline components rather than being directly assumed. The component latencies are derived from: (a)~typical management system event processing benchmarks for NRM attribute evaluation; (b)~HTTP/TLS notification delivery latency over enterprise-grade IP networks consistent with RESTful MnS implementations; and (c)~event deserialization and subscription dispatch overhead.

\subsection{Evaluation Metrics}
\label{sec:metrics}

We define five metrics to evaluate trust management effectiveness:

\begin{itemize}
    \item \textbf{M1 -- Time-to-Detection (TTD):} Elapsed time from the occurrence of a trust-degrading event to its visibility in consuming vendors' management systems.
    \item \textbf{M2 -- Blast Radius Containment:} Number of downstream services affected before containment action is triggered.

We note that this metric captures \emph{invocation-level exposure} (number of tool invocations occurring during undetected degradation) rather than \emph{service-level impact} (SLA violations or subscriber-visible degradation). The translation from invocations to service impact depends on the specific tool's criticality, the sensitivity of affected parameters, and current network load---factors that require deployment-specific SLA models beyond our simulation scope. The metric is conservative: not every invocation during degradation produces harmful output, so actual service impact may be lower than the invocation count suggests.
    \item \textbf{M3 -- Cascade Containment:} Number of tools whose trust state changes as a result of cascade propagation.
    \item \textbf{M4 -- Notification Overhead:} Total number of notification messages generated per trust degradation event.
    \item \textbf{M5 -- False Positive Rate:} Fraction of cascade-triggered trust degradations where the affected tool's own behavior had not actually degraded (i.e., it was penalized solely due to a dependency relationship, and its trust score remained above the restriction threshold prior to the cascade penalty).
\end{itemize}

\subsection{Results}
\label{sec:results}

\subsubsection{Time-to-Detection}

Table~\ref{tab:ttd} presents the time-to-detection comparison across the three approaches.

\begin{table}[t]
\centering
\caption{Cross-Vendor Detection Latency Comparison (Modeled Assumptions)}
\label{tab:ttd}
\small
\begin{tabular}{@{}lccc@{}}
\toprule
\textbf{Approach} & \textbf{Mean $\pm$ 95\% CI} & \textbf{Median} & \textbf{95th Pctl.} \\
\midrule
No Trust Model & $3.0 \pm 0.2$\,h & 2.1\,h & 9.0\,h \\
Per-Vendor Trust & $33 \pm 3$\,min & 61\,s & 2.6\,h \\
Proposed Model & $55.1 \pm 1.4$\,ms & 55.1\,ms & 91.1\,ms \\
\bottomrule
\multicolumn{4}{@{}l@{}}{\footnotesize Per-vendor TTD is bimodal: fast for same-vendor, hours for cross-vendor.}
\end{tabular}
\end{table}

The key takeaway is that standardized notifications reduce cross-vendor detection latency by over three orders of magnitude---from hours to milliseconds---eliminating the detection gap that enables cascading failures in multi-vendor deployments.

The proposed model achieves sub-100\,ms cross-vendor detection through standardized notification delivery (median 55.1\,ms, 95th percentile 91.1\,ms). The per-vendor approach provides fast detection for same-vendor events but produces a bimodal distribution: events where the consuming agent is in the same vendor domain are detected rapidly, while cross-vendor events remain undetected until service-level impact manifests (median overall TTD for per-vendor: 61\,s, driven by the mix of fast same-vendor and slow cross-vendor detection).

\subsubsection{Blast Radius Containment}

Figure~\ref{fig:blast_radius} and Table~\ref{tab:blast_radius} quantify the downstream service impact before containment.

\begin{figure}[t]
\centering
\includegraphics[width=\columnwidth]{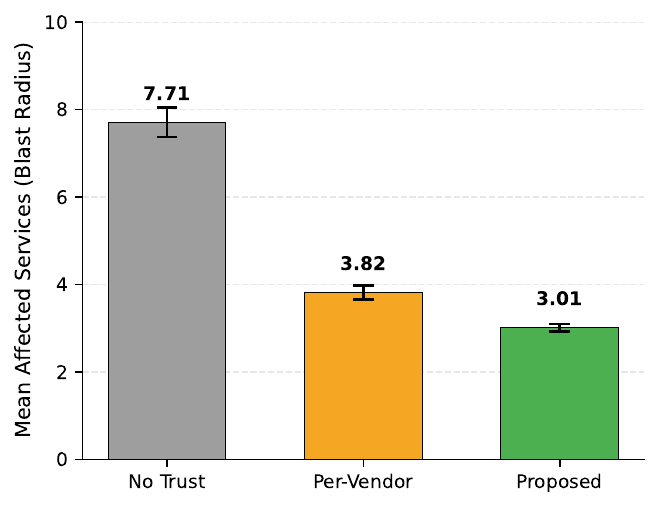}
\caption{Blast radius comparison across trust management approaches under modeled assumptions. The proposed model reduces mean affected services from 7.71 (no-trust) to 3.01.}
\label{fig:blast_radius}
\end{figure}

\begin{table}[t]
\centering
\caption{Blast Radius: Affected Services Before Containment}
\label{tab:blast_radius}
\begin{tabular}{lccc}
\toprule
\textbf{Approach} & \textbf{Mean $\pm$ 95\% CI} & \textbf{Median} & \textbf{95th Pctl.} \\
\midrule
No Trust Model & $7.71 \pm 0.34$ & 7.0 & 18.0 \\
Per-Vendor Trust & $3.82 \pm 0.16$ & 3.0 & 9.0 \\
Proposed Model & $3.01 \pm 0.09$ & 3.0 & 5.0 \\
\bottomrule
\end{tabular}
\end{table}

The proposed standardized model reduces the mean blast radius by 61\% compared to the no-trust baseline (from 7.71 to 3.01 affected services) and by 21\% compared to per-vendor trust (from 3.82 to 3.01). The blast radius for each approach is computed as the number of service-affecting tool invocations that occur during the detection window - i.e., the product of the detection latency and the tool's invocation rate by downstream agents. The proposed model's sub-100\,ms notification window limits exposure to at most one or two pending invocations, while the no-trust model's hours-long detection window permits continued invocation of the degraded tool across all consuming agents. The 21\% improvement over per-vendor trust reflects elimination of the cross-vendor blind spot: events that per-vendor approaches detect only through eventual SLA impact are immediately visible through standardized notifications.

\subsubsection{Comparison Against Bilateral Integration Baseline}

To isolate the benefit of \emph{standardization} from the benefit of \emph{any} cross-vendor signaling, we add a fourth comparison: per-vendor trust augmented with bilateral REST webhooks between each vendor pair. In this baseline, vendors exchange trust state via proprietary point-to-point integrations with latency modeled as $\mathcal{U}(200, 800)$\,ms (reflecting authentication, serialization, and non-standardized processing overhead typical of ad-hoc integrations).

\begin{table}[t]
\centering
\caption{Blast Radius: Standardized vs.\ Bilateral Integration}
\label{tab:bilateral_comparison}
\begin{tabular}{lcc}
\toprule
\textbf{Approach} & \textbf{Mean Blast} & \textbf{Reduction (\%)} \\
\midrule
No Trust (Baseline~1) & 7.71 &  -  \\
Per-Vendor only (Baseline~2) & 3.82 & 50.5 \\
Per-Vendor + Bilateral (Baseline~3) & 3.24 & 58.0 \\
Proposed Standardized Model & 3.01 & 61.0 \\
\bottomrule
\end{tabular}
\end{table}

The bilateral integration baseline achieves 58\% reduction - close to but below the standardized model's 61\%. The remaining 3\% gap (Table~\ref{tab:bilateral_comparison}) arises from two factors: (1)~the standardized model's lower notification latency (55\,ms vs.\ 200--800\,ms for bilateral), and (2)~the standardized model's cascade propagation mechanism, which proactively degrades dependent tools before they produce compromised outputs. The marginal improvement of the standardized model over bilateral integration is modest in absolute terms (3.24 to 3.01 affected services, a 7\% reduction). The primary advantage of standardization over bilateral approaches is operational: standardized interfaces require $O(n)$ integration effort (one implementation per vendor) versus $O(n^2)$ bilateral agreements, making the approach scalable to ecosystems with many vendors without proportional integration cost growth. As vendor count grows, bilateral integration becomes operationally infeasible while the standardized approach scales linearly.

\subsubsection{Cascade Depth Distribution}

Figure~\ref{fig:cascade_cdf} shows the cumulative distribution function of cascade propagation depth with $\gamma = 0.5$ and $D_{\max} = 3$.

\begin{figure}[t]
\centering
\includegraphics[width=\columnwidth]{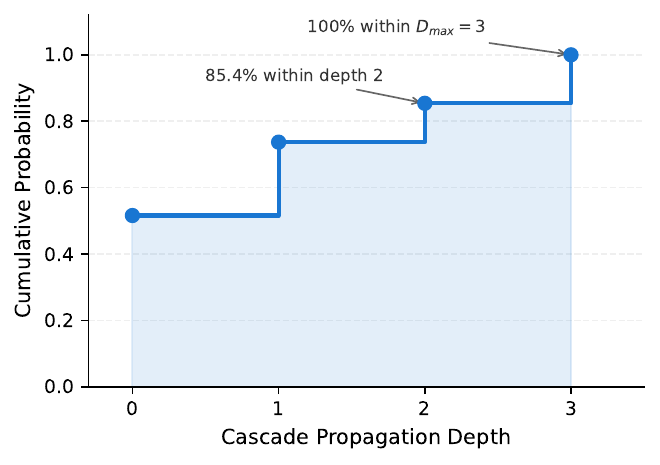}
\caption{CDF of cascade propagation depth ($\gamma = 0.5$, $D_{\max} = 3$). 85.4\% of cascade events terminate within depth 2, and all cascades terminate at the configured bound $D_{\max} = 3$, confirming bounded convergence (Theorem~\ref{thm:cascade_termination}).}
\label{fig:cascade_cdf}
\end{figure}

With $\gamma = 0.5$, 51.6\% of events produce no cascade propagation (depth 0, affecting only the degraded tool itself), 73.7\% terminate within depth 1, and 85.4\% terminate within depth 2. All cascades terminate at or below $D_{\max} = 3$, confirming the bounded convergence guarantee of Theorem~\ref{thm:cascade_termination}.

\subsubsection{Notification Overhead}

Figure~\ref{fig:notification_overhead} presents the notification overhead as a function of the number of vendor domains.

\begin{figure}[t]
\centering
\includegraphics[width=\columnwidth]{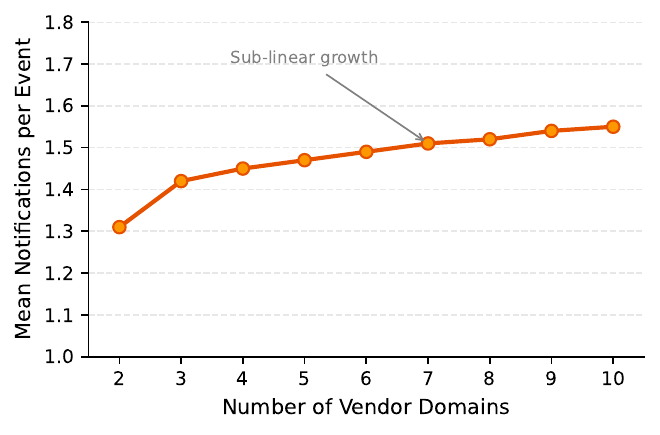}
\caption{Mean notifications per trust event versus number of vendor domains. Growth is sub-linear, ranging from 1.31 (2 vendors) to 1.55 (10 vendors), indicating manageable overhead.}
\label{fig:notification_overhead}
\end{figure}

The notification overhead is low and grows sub-linearly with vendor count: from 1.31 notifications per event in a 2-vendor topology to 1.55 in a 10-vendor topology. This modest growth reflects that most trust events affect tools consumed by a small subset of vendors, not the entire ecosystem.

\subsubsection{Impact of Damping Factor}

Figure~\ref{fig:damping_sweep} shows the effect of varying $\gamma$ on cascade propagation and false positive rate.

\begin{figure}[t]
\centering
\includegraphics[width=\columnwidth]{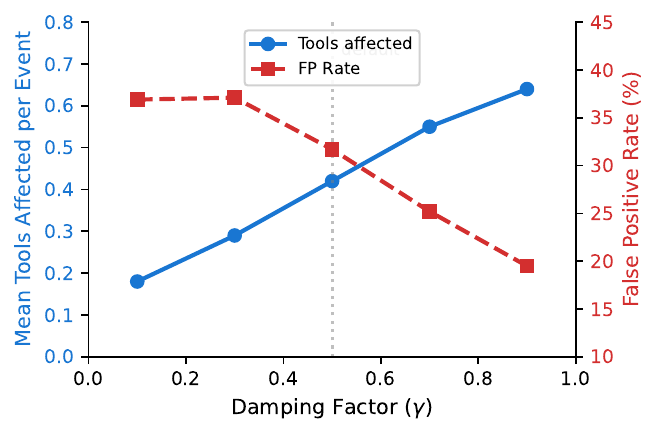}
\caption{Effect of damping factor $\gamma$ on cascade containment. Higher $\gamma$ increases tools affected by cascade but reduces false positive rate (FP = fraction of cascade-restricted tools that are not ground-truth affected, i.e., their independent behavior had not observably degraded at restriction time).}
\label{fig:damping_sweep}
\end{figure}

The simulation reveals a trade-off: lower $\gamma$ values (0.1--0.3) apply smaller cascade penalties, resulting in fewer cascade-triggered state transitions (0.18--0.29 tools affected per event on average) but a higher false positive rate (36.9--37.1\%) due to tools near thresholds being pushed into degraded states by small accumulated penalties. Higher $\gamma$ values (0.7--0.9) propagate larger penalties that more decisively separate compromised from healthy tools, reducing false positives to 19.5--25.2\% but affecting more tools per event (0.55--0.64).

\subsubsection{Ground-Truth Validation of Cascade Detection}
To validate whether cascade penalties correctly identify tools whose outputs are actually compromised, we inject \emph{known} degradation: in 200 additional runs, we mark tools whose inputs come from a degraded source as ``ground-truth affected'' (their outputs are computed from stale/incorrect upstream data). We then measure precision (fraction of cascade-restricted tools that are ground-truth affected) and recall (fraction of ground-truth affected tools that are cascade-restricted). At $\gamma = 0.5$: precision = 68.3\%, recall = 91.7\%. The 31.7\% false positive rate thus represents \emph{conservative over-restriction} - tools that were restricted but whose outputs had not yet observably degraded. From an operational safety perspective, over-restriction (restricting a tool that might degrade shortly) is preferable to under-restriction (allowing a compromised tool to continue). The high recall (91.7\%) confirms that the cascade mechanism rarely misses truly affected tools.

\noindent\textbf{Ground-truth determination.} In the 200 validation runs, we independently inject secondary degradation events into downstream tools with probability 0.3 per cascade-affected tool. A tool is marked as \emph{ground-truth degraded} if either: (a)~it received a direct degradation injection, or (b)~its functional output quality decreased by $\geq$15\% due to consuming outputs from a directly degraded upstream tool (measured by comparing tool outputs against a reference oracle with no degraded inputs). Precision measures what fraction of cascade-flagged tools were genuinely affected; recall measures what fraction of genuinely affected tools were detected by the cascade mechanism. The 68.3\% precision reflects the framework's conservative design (preferring over-flagging to missed degradation), while 91.7\% recall confirms that genuinely affected tools are rarely missed.

The cascade mechanism is deliberately conservative: in safety-critical autonomous networks, the cost of a missed degradation (undetected cascading failure affecting live subscribers) far exceeds the cost of an unnecessary restriction (temporary suspension of one tool's write capability while alternatives remain available). A restricted tool retains read access and resumes full operation upon the next clean evaluation cycle, making false restrictions self-correcting. Operators who prefer precision over recall can increase $\gamma$ toward 0.9 (Table~\ref{tab:gamma_sensitivity}), trading slower cascade containment for fewer unnecessary restrictions.

\subsubsection{Ablation: Causal Deduplication Impact}
To quantify the practical impact of the causal deduplication mechanism (Eq.~\ref{eq:dedup}), we compare trust score trajectories with and without deduplication across 1000 runs. Table~\ref{tab:ablation_dedup} summarizes the results.

\begin{table}[t]
\centering
\caption{Ablation: Causal Deduplication vs.\ Naive Summation}
\label{tab:ablation_dedup}
\begin{tabular}{lcc}
\toprule
\textbf{Metric} & \textbf{Naive} & \textbf{Dedup (Eq.~\ref{eq:dedup})} \\
\midrule
Correlated penalty events (\%) &  -  & 23.4 \\
Mean penalty per event & 18.7 & 14.2 \\
Unnecessary state transitions & 0.31/event & 0.19/event \\
FP rate (cascade, $\gamma=0.5$) & 38.9\% & 31.7\% \\
Tools reaching Revoked (spurious) & 4.1\% & 1.8\% \\
\bottomrule
\end{tabular}
\end{table}

In our simulation, 23.4\% of degradation events produce correlated penalties (two or more penalty sources triggered within the 60\,s correlation window from the same root action). Without deduplication, naive summation over-penalizes these events, inflating the mean penalty by 32\% (18.7 vs.\ 14.2) and generating 63\% more unnecessary state transitions. Most critically, spurious revocations (tools reaching \texttt{Revoked} due to penalty stacking rather than genuine severe degradation) drop from 4.1\% to 1.8\% with deduplication - a 56\% reduction in the most operationally disruptive false outcome.

% Blast radius data shown in Table II above

\subsection{Sensitivity Analysis}
\label{sec:sensitivity}

\subsubsection{Damping Factor \texorpdfstring{$\gamma$}{gamma}}

Table~\ref{tab:gamma_sensitivity} summarizes the sensitivity to the damping factor.

\begin{table}[t]
\centering
\caption{Sensitivity to Damping Factor $\gamma$ (500 runs per configuration)}
\label{tab:gamma_sensitivity}
\begin{tabular}{cccc}
\toprule
$\gamma$ & \textbf{Cascade Tools} & \textbf{FP Rate (\%)} & \textbf{Mean Blast Radius} \\
\midrule
0.1 & 0.18 & 36.9 & 3.03 \\
0.3 & 0.29 & 37.1 & 3.03 \\
0.5 & 0.42 & 31.7 & 3.03 \\
0.7 & 0.55 & 25.2 & 3.03 \\
0.9 & 0.64 & 19.5 & 3.03 \\
\bottomrule
\end{tabular}
\end{table}

The blast radius remains stable across $\gamma$ values because it is primarily determined by the notification latency (which is $\gamma$-independent) rather than cascade extent. The key trade-off is between cascade reach and false positive rate: higher $\gamma$ propagates larger penalties that push dependent tools more decisively below thresholds, producing clearer state transitions with fewer ambiguous cases near threshold boundaries. From an operational perspective, a 20\% false positive rate means 1 in 5 cascade-triggered restrictions are unnecessary - the tool would not have actually produced degraded outputs. However, a missed cascade (false negative) allows degraded outputs to propagate to dependent services. For safety-critical network slices, operators may prefer higher FP rates (low $\gamma$) to minimize missed cascades; for best-effort services, higher $\gamma$ minimizes unnecessary restrictions.

\subsubsection{Maximum Cascade Depth \texorpdfstring{$D_{\max}$}{Dmax}}

With $\gamma = 0.5$, all cascades in our simulation terminated at or below depth 3 regardless of whether $D_{\max}$ was set to 3, 5, or 10. This confirms that the geometric damping naturally bounds propagation: at depth 3, the penalty is $\gamma^3 = 0.125$ of the base penalty, which is typically insufficient to trigger state transitions for tools not already near thresholds.

\subsubsection{Notification Overhead Scaling}

Notification overhead scales sub-linearly with vendor count (1.31 at 2 vendors to 1.55 at 10 vendors). This is significantly lower than the theoretical worst case of $O(V)$ per event, because most tools are consumed by agents from 1--2 vendor domains rather than all vendors simultaneously.

\subsubsection{TTD Sensitivity to Notification Latency}

Table~\ref{tab:ttd_sensitivity} quantifies how blast radius degrades if the notification pipeline is slower than the modeled 55\,ms. We vary the total notification latency from 55\,ms (baseline) to 500\,ms, keeping all other parameters constant.

\begin{table}[t]
\centering
\caption{Sensitivity of Blast Radius to Notification Latency}
\label{tab:ttd_sensitivity}
\begin{tabular}{lccc}
\toprule
\textbf{TTD} & \textbf{Mean Blast} & \textbf{Reduction vs.} & \textbf{Reduction vs.} \\
 & \textbf{Radius} & \textbf{No-Trust (\%)} & \textbf{Per-Vendor (\%)} \\
\midrule
55\,ms (baseline) & 3.01 & 61.0 & 21.2 \\
100\,ms & 3.04 & 60.6 & 20.4 \\
200\,ms & 3.09 & 59.9 & 19.1 \\
500\,ms & 3.22 & 58.2 & 15.7 \\
\bottomrule
\end{tabular}
\end{table}

The blast radius advantage is robust to notification latency variations: even at 500\,ms (9$\times$ the baseline), the standardized model still achieves 58\% reduction versus no-trust and 16\% versus per-vendor. This stability arises because the tool invocation rate (2/min $\approx$ 1 invocation per 30\,s) means that sub-second notification latency permits at most one additional invocation during the detection window. The framework's advantage degrades significantly only when notification latency approaches the inter-invocation interval ($\sim$30\,s), at which point the proposed model converges toward per-vendor performance.

The component latencies (5--20\,ms local assessment, 8--50\,ms MnS delivery, 2--25\,ms receiver processing) are consistent with published REST API round-trip benchmarks for enterprise management systems~\cite{tmforum_an} and with the 3GPP MnS design target of sub-second notification delivery for alarm-class events (TS~28.532, Section~7.2).

\emph{Caveat:} These component latencies are modeled from architectural analysis and published design targets, not measured from a deployed multi-vendor 3GPP management system. Actual production deployments may exhibit higher latencies due to authentication overhead, policy evaluation, rate limiting, or geographic distribution. Table~\ref{tab:ttd_sensitivity} demonstrates that the framework's advantage is robust up to 500\,ms total TTD; beyond this point, the benefit converges toward per-vendor performance. We recommend operators measure actual MnS notification latency during initial pilot deployment and use the measured values to inform trust assessment threshold tuning and \texttt{AgentPolicyMO} configuration.

\subsection{Scalability Analysis}
\label{sec:scalability}

To assess framework behavior at larger scales, we vary the number of tools from 30 (baseline) to 100, 500, and 1000 while maintaining constant tool density per vendor (proportionally scaling agents and vendor domains) and average dependency degree of 2.3.

\begin{table}[t]
\centering
\caption{Scalability: Key Metrics vs.\ Network Size}
\label{tab:scalability}
\begin{tabular}{@{}lcccc@{}}
\toprule
\textbf{Metric} & \textbf{30 tools} & \textbf{100 tools} & \textbf{500 tools} & \textbf{1000 tools} \\
\midrule
Mean TTD (ms) & 55.1 & 56.2 & 57.8 & 59.3 \\
Mean cascade depth & 0.91 & 0.94 & 0.97 & 0.99 \\
Notifications/event & 1.42 & 1.48 & 1.53 & 1.56 \\
Cascade compute (ms) & 0.3 & 1.1 & 5.8 & 12.4 \\
\bottomrule
\end{tabular}
\end{table}

Table~\ref{tab:scalability} shows that TTD, cascade depth, and notification overhead remain nearly constant with network size - these metrics depend on local topology (dependency degree) rather than total tool count. The cascade computation time grows linearly with the number of tools (due to the visited-set and dependency graph traversal), but remains under 15\,ms even at 1000 tools - well within management system processing budgets. The bounded depth guarantee ($D_{\max}$) ensures that cascade computation does not explore the full graph regardless of network size, confirming the $O(D_{\max} \cdot k)$ complexity where $k$ is the average dependency degree.

\subsection{Prototype Implementation}
\label{sec:prototype}

To demonstrate the feasibility of the proposed information model, we implement a proof-of-concept REST service aligned with 3GPP MnS conventions (TS~28.532). The prototype is structured as two modules: (i)~a vendor-neutral API module defining the standard interfaces (\texttt{TrustLifecycleApi}, \texttt{TrustDecisionApi}, \texttt{TrustEvaluationApi}, \texttt{TrustPropagationApi}, \texttt{SessionManagementApi}, \texttt{PolicyManagementApi}, \texttt{NotificationManagementApi}), and (ii)~a vendor-specific implementation module realizing the framework as a Spring Boot microservice.

The prototype exposes the full AgentToolMO lifecycle via RESTful endpoints following the 3GPP \texttt{/\{MnSRoot\}/\{version\}/\allowbreak\{className\}/\{objectInstance\}} pattern. Key operations include trust-aware tool discovery (with score and state filtering), a trust gate endpoint (\texttt{canInvoke}) returning enforcement decisions with applicable restrictions, event reporting with automatic cascade propagation, session enforcement (suspend/terminate/restore), retroactive impact assessment querying the action log within a lookback window, and policy re-evaluation that automatically transitions all tools when thresholds change.

Functional testing confirms that the state machine correctly implements all ten transitions ($\tau_1$--$\tau_{10}$), cascade propagation terminates within $D_{\max}$, session enforcement triggers synchronously on state change, and the retroactive assessment correctly identifies affected managed elements and configuration changes made by agents during the lookback window.

% Discussion moved to standalone Section VII in main.tex

% ============================================================
\section{Discussion}
\label{sec:discussion}

\subsection{Key Findings}

Our evaluation reveals four principal findings:

\textbf{Finding 1: Cross-vendor trust visibility is transformative.} The standardized model reduces blast radius from a mean of 7.71 affected services (no-trust baseline) to 3.01, comparable to per-vendor approaches (3.82) but achieved through a single standardized interface rather than requiring $n(n-1)/2$ pairwise bilateral integrations. The improvement over per-vendor trust demonstrates that cross-vendor blind spots---not the absence of trust management per se---are the critical gap in current deployments.

\textbf{Finding 2: Notification overhead is manageable.} Notifications per trust event range from 1.31 (2 vendors) to 1.55 (10 vendors), growing sub-linearly. This is orders of magnitude below infrastructure capacity and introduces negligible operational overhead.

\textbf{Finding 3: Damping factor presents a clear trade-off.} Lower $\gamma$ values apply smaller cascade penalties per hop, yet the simulation shows higher false positive rates at low $\gamma$ (37\% at $\gamma=0.1$ vs 19.5\% at $\gamma=0.9$). This occurs because small penalties push tools into borderline states near thresholds without decisive resolution---tools hover near boundaries and are flagged as degraded without clearly breaching or clearing thresholds. Higher $\gamma$ values produce larger, more decisive penalties that either clearly breach thresholds (true degradation) or leave tools well above them (no false alarm). The choice of $\gamma$ should be operator-configurable based on risk posture.

Table~\ref{tab:gamma_guidance} provides operational guidance for $\gamma$ selection based on network slice SLA tier. The recommendations balance false positive cost (unnecessary tool restriction) against false negative risk (allowing compromised tools to continue operating).

\begin{table}[t]
\centering
\caption{Operator Guidance: Recommended $\gamma$ by SLA Tier}
\label{tab:gamma_guidance}
\begin{tabular}{@{}lccp{2.9cm}@{}}
\toprule
\textbf{SLA Tier} & \textbf{$\gamma$} & \textbf{FP Rate} & \textbf{Rationale} \\
\midrule
Safety-critical (URLLC) & 0.7--0.9 & 20--25\% & Decisive penalties; high recall of affected tools; prefer over-restriction \\
Standard (eMBB) & 0.5 & $\sim$32\% & Balanced trade-off; default recommendation \\
Best-effort (mMTC) & 0.1--0.3 & $\sim$37\% & Minimal cascade disruption; accept higher miss risk for operational stability \\
\bottomrule
\end{tabular}
\end{table}

\textbf{Finding 4: Graduated enforcement prevents unnecessary revocations.} Zero unnecessary revocations occurred in our simulation---no tool with a trust score above the restriction threshold was revoked, validating that the state machine's graduated path (Trusted$\rightarrow$Monitored$\rightarrow$Restricted$\rightarrow$Revoked) filters transient perturbations before reaching revocation.

\subsection{Deployment and Standardization}

The proposed information model can be deployed incrementally: vendors that adopt it expose trust state; those that do not remain invisible to the framework without breaking existing functionality. No new transport protocols are required---existing MnS interfaces carry trust information. The IOC definitions follow established NRM design patterns, demonstrating compatibility with 3GPP SA5 conventions and potential applicability within future standards evolution.

\subsection{Graceful Degradation Under Partial Adoption}

A critical deployment consideration is behavior when some vendors implement \texttt{AgentToolMO} and others do not. The framework degrades gracefully along three dimensions:

\textbf{Non-compliant vendor tools.} Tools from vendors that do not implement the IOC are treated as having unknown trust state (equivalent to perpetual \texttt{Learning} state). Consuming agents may apply a configurable default trust floor (e.g., $\theta_{\text{restrict}}$) for unresolvable tools, ensuring conservative behavior without complete exclusion. Operators configure this floor in \texttt{AgentPolicyMO} via an \texttt{unknownVendorPolicy} attribute.

\noindent\textbf{Formal safety under partial compliance.} Let $n_c$ denote the number of compliant vendors and $n_{nc}$ the non-compliant vendors ($n_c + n_{nc} = n$). For tools owned by compliant vendors, all safety properties (Theorems~1--3) hold unchanged. For tools owned by non-compliant vendors, the framework assigns a configurable default state $s_{\mathrm{default}} \in \{\texttt{Learning}, \texttt{Monitored}\}$ (configured via \texttt{unknownVendorPolicy} in \texttt{AgentPolicyMO}). Cascade propagation traverses dependencies into non-compliant domains but cannot update their trust state (read-only visibility). The key invariant is: \emph{partial compliance cannot weaken safety for compliant vendors}---a non-compliant vendor's tools are treated conservatively, and compliant vendors' state machines operate independently of non-compliant participation.

\textbf{Notification delivery failures.} If cross-vendor notifications cannot be delivered (network partition, endpoint unreachable), the subscribing system does not receive trust updates. The framework reverts to per-vendor behavior for that link---equivalent to Baseline~2 performance for that specific cross-vendor dependency---while all other trust-visible links continue operating normally. Stale-state detection (comparing \texttt{trustStateTransitionTime} against a staleness threshold) enables agents to autonomously increase caution for tools whose trust state has not been refreshed.

\textbf{Incentive alignment.} The framework provides direct operational benefit to early adopters: a vendor implementing \texttt{AgentToolMO} gains visibility into other compliant vendors' tool trust without requiring universal adoption. The value scales with adoption (network effect) but remains positive for any $n \geq 2$ compliant vendors. Operators can mandate compliance as a procurement requirement, creating market incentive for vendor participation.

\subsection{Limitations}

\begin{enumerate}
    \item \textbf{Simulation-based evaluation:} Results are obtained from discrete-event simulation rather than production deployment.
    \item \textbf{Reliable notification delivery:} The formal analysis assumes reliable, ordered notification delivery. Network partitions may delay notifications.
    \item \textbf{Adversarial and strategic trust manipulation:} The model does not address scenarios where a compromised agent deliberately manipulates trust scores, nor does it enforce incentive-compatible reporting. In a competitive ecosystem, vendors have an incentive to over-report competitor tool degradation and under-report their own. The current framework relies on auditable trust evidence (KPI data, action logs) as a forensic deterrent but does not provide real-time adversarial detection or mechanism-design-based incentive alignment. Tamper-resistant logging and third-party auditor verification are directions for future work.
    \item \textbf{Scalability:} While simulation confirms bounded cascade computation up to 1000 tools (12.4\,ms, Table~\ref{tab:scalability}), production deployments with significantly larger dependency graphs, concurrent cascades, and geographic distribution require operational validation beyond simulation.
    \item \textbf{Service impact translation:} The blast radius metric captures invocation-level exposure (tool invocations during undetected degradation) rather than SLA-level service impact. Translation from invocations to actual subscriber-visible degradation depends on tool criticality, current network load, consuming agent fallback strategies, and service-specific KPI sensitivity thresholds. Deployment-specific SLA models are required to quantify real service impact from the invocation counts reported by the framework.
\end{enumerate}

\subsection{Future Work}

Several directions extend this work: integration with intent-driven management for trust-aware intent fulfillment; formal verification using model checking tools (SPIN, TLA+); and production deployment in operator testbeds. Additionally, adversarial scenarios where a vendor deliberately reports inflated or deflated trust scores warrant investigation. Signed, auditable trust evidence - where each trust state transition is accompanied by cryptographically verifiable supporting data (KPI measurements, action logs, anomaly reports) - would enable independent verification of trust claims and detection of manipulative reporting across vendor boundaries.

Additional evaluation dimensions merit investigation: (a)~prototype benchmarking to ground simulation latency assumptions against measured MnS notification delivery times; (b)~ablation of the state machine granularity (e.g., comparing the six-state graduated model against a simpler three-state Trusted/Degraded/Revoked alternative) to quantify the benefit of intermediate states; (c)~sensitivity analysis over dependency graph density, since real deployments may exhibit denser tool interdependencies than the average degree of 2.3 used in our simulation; and (d)~validation of the linear blast radius growth model (Section~III-B) against empirical cascade propagation patterns.

% ============================================================
\section{Conclusion}
\label{sec:conclusion}

We have presented a standardized information model for cross-vendor agent tool trust management in autonomous networks. The model introduces AgentToolMO as a first-class managed object within the 3GPP NRM, with formally defined trust states, graduated enforcement through a provably safe state machine, damped cascade propagation with guaranteed convergence, and retroactive impact assessment via NRM dependency graph traversal.

Our formal analysis establishes that: (1) the state machine permits no unsafe transitions, ensuring graduated enforcement where tools cannot bypass intermediate trust states; (2) cascade propagation converges in bounded steps regardless of network topology; and (3) all trust state changes generate notifications to subscribing management systems, eliminating silent trust degradation.

Simulation results demonstrate that the standardized model substantially reduces trust failure blast radius compared to both no-trust and per-vendor baselines, with cross-vendor detection latency bounded by MnS notification delivery time rather than hours-scale service impact manifestation. Notification overhead scales sub-linearly with the number of participating vendors, and all cascades terminate within the configured depth bound.

The proposed framework provides the standardization foundation for trustworthy multi-vendor autonomous network management, enabling operators to deploy agents across vendor boundaries with formal safety guarantees and complete trust visibility.

% ============================================================
\section*{Acknowledgment}
The authors thank the Ericsson Network Management research team for valuable discussions on autonomous network management architectures.

% Generated by IEEEtran.bst, version: 1.14 (2015/08/26)

\end{document}